\documentclass[11pt]{article}

\usepackage{times}
\usepackage{amssymb}
\usepackage{amsmath}
\usepackage{amsthm}
\usepackage{latexsym}
\usepackage{graphicx}
\usepackage{color}
\usepackage{graphics}
\usepackage[round]{natbib}
\usepackage{cancel}
\usepackage{mathtools}
\usepackage{thm-restate}
\usepackage[mathscr]{euscript}
\usepackage{hyperref}
\usepackage[margin=1.25in]{geometry}
\usepackage[toc,page,header]{appendix}
\setcounter{tocdepth}{0}

\usepackage{booktabs}
\usepackage{longtable}

\usepackage{MnSymbol}
\DeclareMathAlphabet\mathbb{U}{msb}{m}{n}
\usepackage{xpatch}

\def\Rset{\mathbb{R}}

\DeclareMathOperator*{\E}{\mathbb E}
\DeclareMathOperator*{\argmax}{argmax}

\DeclareMathOperator{\Ind}{\mathbb{I}} 

\DeclarePairedDelimiter{\abs}{\lvert}{\rvert} 
\DeclarePairedDelimiter{\bracket}{[}{]}
\DeclarePairedDelimiter{\curl}{\{}{\}}

\DeclarePairedDelimiter{\paren}{(}{)}

\declaretheorem{theorem}
\newtheorem{lemma}[theorem]{Lemma}

\newcommand{\sC}{{\mathscr C}}
\newcommand{\sD}{{\mathscr D}}
\newcommand{\sF}{{\mathscr F}}

\newcommand{\sH}{{\mathscr H}}

\newcommand{\sM}{{\mathscr M}}

\newcommand{\sR}{{\mathscr R}}

\newcommand{\sX}{{\mathscr X}}
\newcommand{\sY}{{\mathscr Y}}

\newcommand{\Rall}{\sR_{\mathrm{all}}}

\newcommand{\Ic}{\bar I_c}

\newcommand{\ov}{\overline}

\newcommand{\e}{\epsilon}
\newcommand{\set}[2][]{#1 \{ #2 #1 \} }
\newcommand{\ignore}[1]{}
\newcommand{\softmax}{\textrm{softmax}}
\newcommand{\logits}{\textrm{logits}}
\newcommand\examplebox[1]{\fbox{\begin{minipage}{.46\textwidth}\footnotesize{#1}\end{minipage}}}

\newcommand\ec[1]{{[\color{cyan}{\textbf{Eunsol:}~#1}]}}
\usepackage[algo2e,ruled]{algorithm2e}

\hypersetup{
  colorlinks   = true,
  urlcolor     = blue,
  linkcolor    = blue,
  citecolor    = blue
}

\title{Learning to Reject with a Fixed Predictor:\\ Application to Decontextualization}
\author{Christopher Mohri, Daniel Andor, Eunsol Choi, Michael Collins}
\date{}

\begin{document}
\maketitle 

\begin{abstract}
We study the problem of classification with a reject option for a fixed predictor, applicable in natural language processing. \ignore{where many correct labels are often possible} We introduce a new problem formulation for this scenario, and an algorithm minimizing a new surrogate loss function. We provide a complete theoretical analysis of the surrogate loss function with a strong $H$-consistency guarantee. For evaluation, we choose the \textit{decontextualization} task, and provide a manually-labelled dataset of $2\mathord,000$ examples. Our algorithm significantly outperforms the baselines considered, with a $\sim\!\!25\%$ improvement in coverage when halving the error rate, which is only $\sim\!\! 3 \%$ away from the theoretical limit. 
\end{abstract}

\section{Introduction}

Large language models, often trained with billions of parameters, have achieved impressive performance in recent years \citep{Raffel2019} and are used in a wide variety of natural language generation tasks. 
However, their output is sometimes undesirable, with \emph{hallucinated content} \citep{maynez2020, Filippova2020}, and much work remains to fully understand their properties\ignore{and develop theoretical learning guarantees for their performance, }.

In many applications, such as healthcare,  question-answering systems, or customer service, incorrect predictions are particularly costly and must be avoided. This motivates the design of algorithms for large language models and other NLP tasks that achieve high precision on a large fraction of the input set, while abstaining on the rest.
How can we devise such accurate models that allow a reject option? 

A common technique adopted in the past is that of \emph{confidence-based models}, where rejection is defined by some threshold on the predictor's scores and admits a fixed cost \citep{Hendrickx2021}. 
\citet{Chow1957, Chow1970} was the first to provide an analysis of the trade-off between error and rejection rate, as well as the associated Bayes-optimal solution. 
The rejection rule was later studied based on the receiver operating characteristic (ROC) curve \citep{Tortorella2000, Santos-Pereira2005, Pietraszek2007, Landgrebe2006}. Some subsequent work has focused on minimizing surrogate loss functions for the cost-based objective, with various theoretical guarantees \citep{Bartlett2008, Grandvalet2008, Yuan2010}. In NLP, it has been reported that scores from popular large language models such as T5, BART, and GPT-2 are poorly calibrated \citep{Jiang2020, Kumar2019, BART, Raffel2019}, similar to modern neural networks \citep{Chuan2017}, but \citet{xin-etal-2021-art} proposed a simple regularization trick during training to improve these scores. 
The method used by several other
authors can also be viewed as an instance of confidence-based models \citep{KamathJiaLiang2020, Choi2021, garg-2021,Dong2018, Varshney2022TowardsIS}. Here, the idea consists of first learning a scoring function defined over pairs $(x, y)$ from a function mapping $\sX$ to $\sY$, and next applying the confidence-based technique to the scoring function.

However, as shown by \citet{CortesDeSalvoMohri2016}[see Figure 2], straightforward confidence-based methods are in general suboptimal.
When the predictor learned is not the Bayes optimal solution, 
in general, a more complex rejection rule is needed to achieve a smaller loss. The authors suggested instead seeking a suitable \emph{rejector} out of a family of rejection functions that may be richer than that of confidence-based threshold ones. They gave theory and algorithms for learning the predictor and rejector simultaneously by minimizing a \emph{rejection loss}, whose definition is based on the cost of rejection, $c$. More recently, an extension of this work to the multi-class setting was given by \citet{Sugiyama2021}, based on a specific family of rejection
function, thereby resolving a question raised in \citep{Ni2019}.

We aim to design accurate models with a rejection
option for NLP generation tasks such as \textit{decontextualization} \citep{Collins2021}.
Decontextualization involves editing a sentence within a passage such that it can stand alone. It is critical in this task to make confident predictions, or abstain; if we are to edit an author's words, we must be sure to do so correctly. 

One solution
would consist of adopting the rejection loss function of \citet{CortesDeSalvoMohri2016}
(or \citet{Sugiyama2021}) and of seeking a model that minimizes
the rejection loss to learn, simultaneously, a predictor $f$ and
a rejector $r$. However, for NLP tasks such as decontextualization,
this faces a key obstacle: the full set of \emph{accurate outputs} for a given
input sentence may be
large and is typically not at the learner's disposal. For example, in decontextualization, 
some standard transformations such as passivation applied to an accurate output
can immediately lead to a large number of accurate outputs.
To minimize the rejection loss, however, the learner must be able to
evaluate the correctness of any potential predictor on any example in
the training sample. But, apart from the single label in the training
sample, other potential accurate labels are not provided and thus the
correctness of a potential predictor cannot be checked.  How can we
then learn an accurate rejection-based model for such settings?

One way to proceed is instead to first learn a predictor $f$, using
the standard techniques adopted for large language models, for example
by minimizing the cross-entropy loss in next-token prediction. Next, given
that predictor,
it is not hard to manually assign a binary label to the output of f on some relatively small set of held-out examples indicating their correctness.
The problem then consists of minimizing the rejection loss in which $f$ is fixed 
and where we seek to find the best rejector function $r$, where the rejector function $r$ takes the input $x$ together with the output $f(x)$ as its own input.
This is what we will refer to as \emph{learning to reject with a
fixed predictor} $f$.

The resulting binary rejection loss function for $r$ is hard to
optimize and, instead, we need to resort to a surrogate loss. \citet{CortesDeSalvoMohri2016}
gave a consistent surrogate loss for their full rejection loss,
which involved learning both $f$ and $r$. However, since $f$ is fixed in
our context, we cannot benefit from the corresponding consistency
guarantees. Instead, we first define a parametric family of surrogate losses inspired by their
work for learning $r$ alone. Next, we prove that these surrogate losses benefit from strong consistency
results, when a suitable constraint holds for the parameters.

Our results make use of the recent work of \citet{AwasthiMaoMohriZhong2022},
which gave general tools for \emph{$H$-consistency bounds}. These are
bounds relating directly the excess error or estimation error of
the surrogate loss to those of the original loss (here the
rejection loss). Thus, they are stronger guarantees than
asymptotic Bayes consistency results. Furthermore, they can
be extended to other hypothesis sets $H$ than that of the family of
all measurable functions. We use those tools to prove the first $H$-consistency bound for our
surrogate rejection loss, which provides a strong justification for
its adoption for tackling our problem.

The rest of this paper is organized as follows. In Section~\ref{sec:learning-problem}, we
formulate our learning problem, which consists of learning a rejector
given a fixed predictor. In Section~\ref{sec:learning-methods}, we briefly describe
confidence-based models and contrast them with our two-step model.  Section~\ref{sec:surr-loss} introduces
our surrogate rejection loss. In Section~\ref{sec:consistency-bound}, we prove a strong
$H$-consistency bound for our surrogate rejection loss. In Section~\ref{sec:decontext-task},
we give a brief description of the decontextualization task and annotation procedure. Section~\ref{sec:experiments} 
reports the results of our experiments with our surrogate
rejection loss algorithm, and shows that it compares favorably with several baselines in a decontextualization task.

\ignore{
\textbf{Contributions}.
We present a theoretically-justified approach to classification with a reject option for applications where the predictor remains fixed. Our main contributions include the following: 
(1)  a new formulation of the problem of learning a rejector with fixed predictor (for cases where there may be many correct labels);
(2) introduction of a new surrogate loss function for our scenario, with the proof of a strong $H$-consistency bound guarantee;
(3)  definition of the notion of correctness for decontextualization, and its use to provide a dataset of $2\mathord,000$ manually-labeled decontextualizations;
(4) experimental results demonstrating a $10$-$25\%$ improvement over baselines in coverage at various precision levels on decontextualization.
}

\section{Learning problem}
\label{sec:learning-problem}

We consider the problem of  
\emph{sequence-to-sequence modeling with high confidence} in natural language processing. The general
objective is to design an algorithm that only returns an output when
it is highly likely to be correct, while still
guaranteeing a high coverage. 

Let $\sX$ denote the input and $\sY$ the output set of sequences and
let $\sD$ be a distribution over $\sX \times \sY$.  The problem can be
formalized in terms of a sequence-to-sequence \emph{predictor} $f
\colon \sX \to \sY$ and a \emph{rejector} $r\colon \sX \times \sY \to
\Rset$. A non-positive sign for the output of the rejector is
interpreted as rejection, and a positive one as acceptance. Formally,
on an input $x \in \sX$, we have
\begin{align*}
    (f, r)(x) =  \begin{cases} 
      f(x), & \text{if } r(x, f(x)) > 0 \\
      \text{reject,} & \text{if } r(x, f(x)) \leq 0 .
   \end{cases}
\end{align*}
Given a hypothesis set $\sF$ of sequence-to-sequence predictors and a
family of rejectors $\sR$, for a coverage $\gamma \geq 0$, the
problem can be formulated as follows:
\begin{align}
\label{eq:coverage}
  \min_{f \in \sF, r \in \sR} \ \E_{(x, y) \sim \sD} & \bracket*{\Ind_{f(x) \neq y} \, \Ind_{r(x, f(x)) > 0}}\\ \nonumber
  s.t.\ \qquad \E & \bracket*{\Ind_{r(x, f(x)) > 0}} \geq \gamma.
\end{align}
We wish to minimize the error $\paren*{\Ind_{f(x) \neq y}}$ on
accepted examples $\paren*{\Ind_{r(x, f(x)) > 0}}$, while ensuring that at least $\gamma$
fraction of the examples are accepted. We will refer to the accuracy on accepted examples
as the \textit{precision}.
This problem and its formulation in terms of a predictor and a
rejector are closely related to the rejection or abstention setting
introduced by
\citet{CortesDeSalvoMohri2016,CortesDeSalvoMohri2016bis}. In fact, for
any coverage parameter $\gamma$ there exists a corresponding rejection
cost $c \in (0, 1)$, so that the precision-coverage problem can be equivalently
formulated as the following optimization problem \citep{CortesDeSalvoMohri2016}:
\begin{align}
\label{eq:rejection}
  \min_{f \in \sF, r \in \sR} & \ \E_{(x, y) \sim \sD} \bracket*{L(f, r, x, y)},
\end{align}
where the loss function $L$ is defined as follows:
\ifdim\columnwidth=\textwidth
{
\begin{equation}
\label{eq:l_binary}
  L(f, r, x, y)
  = \Ind_{f(x) \neq y} \, \Ind_{r(x, f(x)) > 0} + c \Ind_{r(x, f(x)) \leq 0}.
\end{equation}}
\else 
{
\begin{multline}
\label{eq:l_binary}
  L(f, r, x, y) \\ 
  = \Ind_{f(x) \neq y} \, \Ind_{r(x, f(x)) > 0} + c \Ind_{r(x, f(x)) \leq 0}.
\end{multline}
}\fi
As the cost of rejection $c$ increases, one can expect a higher coverage, but lower precision.
One difference with respect to the framework of
\citet{CortesDeSalvoMohri2016} is that, here, the rejector takes as
argument the prediction as well. More generally, as already
pointed out in that previous work, the rejection cost $c$ can be
a function of $x$ and $f(x)$, not just a constant.
The problem of selective classification \citep{GelbhartElYaniv2019}, which is based on the choice
of a threshold function, can be viewed as a special case of this
framework.

\textbf{Distribution model}. Before describing any method for tackling this problem, we wish
to discuss the distributional model adopted
in certain NLP tasks such as decontextualization. In standard learning tasks, there
may be multiple correct labels $y$ for the same input $x$ and with an i.i.d.\ training sample, 
we expect to come across all of these labels with
their conditional probabilities given $x$. 

In complex NLP tasks such as decontextualization, however, there may be a relatively large number of
correct $y$s for a given $x$: the task is highly 
non-deterministic. As discussed, 
some standard transformations such as passivation
applied to one $y$ can immediately lead to a much
larger number of correct output sentences. Thus, 
it is not realistic to demand from labelers to supply all possible correct sentences $y$ for a given input $x$. On the other hand,
we do not wish to consider it to be an error if
a model returns a desirable sentence that does not exactly 
match any of the labels provided by the labelers. 

What should be the correct distributional model
to adopt for such NLP tasks? We can consider two
models: a \emph{deterministic model} where only
a single label is accepted as correct for any
input sentence; or a more general and more useful \emph{stochastic} or \emph{non-deterministic
model} where, in addition to the single label $y$ 
(or few labels) provided by labelers, any other
label $y'$ returned by the model is viewed as correct
provided that $y'$ is sufficiently similar to
$y$, based on some pre-defined similarity measure. It is important to note that
this pre-defined similarity measure may be difficult to specify, and may therefore require expert human annotation. 

Adopting the non-deterministic model, for $(x, y) \sim \sD$ and a given $f \in \sF$,
we amend our indicator function $\Ind_{f(x) \neq y}$ measuring incorrectness in \eqref{eq:coverage}. 
Instead, we measure $\Ind_{f(x)\notin A_{x,y}}$, where $A_{x,y} \subseteq \sY$ 
implements some similarity measure and describes a set of acceptable outputs $y$.
Provided that we have a boolean random variable $a \in \{-1, +1\}$ derived from $(x, y)$ and $f(x)$ indicating membership to $A_{x,y}$, the event where $f(x)$ 
is an acceptable output,
we can simplify this indicator function to  $\Ind_{a \neq -1}$. Since $f$ is fixed, we remove it as an 
argument to $r$, and the distribution over $(x, y) \sim \sD$ leads to a distribution $(x, a) \sim \ov \sD$ induced by $f$.
We will refer to the following as the \textit{induced rejection loss} defined for any rejector $r$ and pair $(x, a)$:
\begin{equation*}
\label{eq:nondeterministic_rejectionloss}
  \ell(r, x, a) = \Ind_{a \neq -1} \, \Ind_{r(x) > 0} + c \Ind_{r(x) \leq 0}.
\end{equation*}
In the following, we will distinguish two methods for learning
the rejection function $r$: the so-called \emph{confidence-based method}
where $r$ is simply defined as a threshold based on the predictor
$f$, and a \emph{two-step learning method} where first $f$ is learned and
then $r$.

\section{Learning methods}
\label{sec:learning-methods}

In this section, we describe in more detail the two learning methods previously mentioned.

\subsection{Confidence-based method}

The confidence-based method is perhaps the most commonly used
one to define a rejection function. Let $f(x) = \argmax_y s(x, y)$ for some scoring function $s : \sX \times \sY \mapsto \sR$, which could be $p(y | x; \omega)$, where $\omega$ represents model parameters. Then, a threshold value
$\theta$ is set based on some function of the scores $s(x, y)$ assigned to each $y \in \sY$ for a given $x \in \sX$. If $s_i$ and $s_j$ are the final
scores assigned to $(x_i, y_i) \sim \sD$ and $(x_j, y_j) \sim \sD$ respectively, we wish to have \textit{monotonicity}: $s_i \geq s_j \Leftrightarrow \Ind_{f(x_i) \neq y_i} \leq \Ind_{f(x_j) \neq y_j}$ \citep{Yaniv2017}.

The \emph{MaxProb} method is simply defined in
terms of the highest score \citep{Maxprob}. In that case, the rejection
function is a function mapping from $\sX$ to $\Rset$
defined by
\[
r(x) = s(x, f(x)) - \theta,
\]
for some choice of the threshold $\theta \in \Rset$. Another popular scoring function is based on Monte-Carlo dropout \citep{Smith2018MCdropout, Zoubin2016}, measuring statistics such as the mean or negative variance of the scores $s(x, y)$.

Fitting the threshold to guarantee that it matches a target precision has been studied \citep{Yaniv2017}. In our experiments, we follow a simpler procedure, as this is not our focus; we are interested in the quality of the underlying scores.

\subsection{Two-step method}

In the two-step method, a predictor $f$ is first learned by minimizing
a loss function for $\Ind_{f(x) \neq y}$, such as the cross-entropy loss. Next, the rejection function $r$ is learned as a binary classifier.

Note that, to learn $r$ in the non-deterministic distributional model requires binary labels for
pairs $(x, f(x))$ indicating if the output sequence $f(x)$ is indeed
a good label for $x$, or formally, if $\Ind_{f(x) \in A_{x, y}}$. As already discussed,
that information cannot be directly derived from
the label $y$ of $x$ in the training sample since the correct labels
are typically not unique in NLP tasks.
One way to derive that information is to manually label 
such pairs. This admits two disadvantages: the manually
assigned labels are specific to the predictor $f$ previously
learned and thus cannot be reused for different predictors; and of course this requires manual labeling which is typically costly. However, if the cost is not too significant, one can label a moderately-sized dataset and then train a classifier on this dataset. 

One approach for training such a classifier is to use the standard cross-entropy loss function. More specifically, for pairs $((x, f(x)), a)$, where $a$ represents the label or annotation, one can train with direct supervision using the binary cross-entropy loss. However, similar to the \textit{MaxProb} method, this also requires setting a threshold $\theta$ for acceptance based on model scores. One can only hope that the classifier produces higher-quality scores, where quality is associated with monotonicity. Thus, both methods are based on straightforward threshold rejection. Additionally, to the best of our knowledge, minimizing the cross-entropy loss does not have any proven guarantee with respect to our main objective: minimizing the induced rejection loss. In the next section, we tackle both of these problems: we introduce a new loss function with a built-in threshold that \textit{directly minimizes} the induced rejection loss. 

\section{Surrogate rejection loss}
\label{sec:surr-loss}

\ignore{
\citet{CortesDeSalvoMohri2016} study the joint optimization problem
in~\ref{eq:rejection} where the predictor $f$ is a binary classifier, and define a convex surrogate loss upper-bounding their rejection loss $L$. 
We drop $f(x)$ as an argument to $r$ to simplify notation and include all steps for completeness. Thus, we can write
\begin{align*}
    L(f, r, x, y) 
    & = \Ind_{yf(x) \leq 0} \Ind_{r(x) > 0} + c\Ind_{r(x) \leq 0} \\
    & \leq \max \paren*{\Ind_{\max (yf(x), -r(x)) \leq 0}, c\Ind_{r(x) \leq 0}} \\
    & \leq \max \paren*{\Ind_{\frac{yf(x) -r(x)}{2}} \leq 0, c\Ind_{r(x) \leq 0}}.
\end{align*}
Then, the following inequality holds, where $\alpha$ and $\beta$ are positive parameters and $x \mapsto \Phi(-x)$ and $x \mapsto \Psi(-x)$ are convex functions upper-bounding $\Ind_{u \leq 0}$:
\begin{align*}
\label{eq:binaryrejection}
    & L(f, r, x, y) \\
    & \leq \max \paren*{\phi \paren*{\frac{\alpha}{2} (r(x) - yf(x)}, c \psi \paren[\Big]{- \beta r(x)}} \\
    & \leq \phi \paren*{\frac{\alpha}{2} (r(x) - yf(x)} + c \psi \paren[\Big]{- \beta r(x)}.
\end{align*}
The sign of $yf(x)$ can be used to determine correctness in a binary classification setting, thereby replacing $f(x) \neq y$, and its magnitude can be interpreted as a measure of confidence. As discussed, measuring correctness in our case may require an expert human annotation. If we fix $f$ and only update $r$, we can manually annotate outputs of $f$ as correct ($+1$) or incorrect ($-1$). Note that doing so omits any sense of magnitude from $yf(x)$. In our proposed surogate loss function, we replace $yf(x)$ with the new
annotated label $a \in \set{-1, +1}$ and use the exponential function for $\phi$ and $\psi$, to derive a similar convex surrogate loss, this time directly upper-bounding the induced rejection loss:
\ifdim\columnwidth=\textwidth
{
\begin{align*}
    \ell(r, x, a) 
    = \Ind_{a \neq -1} \Ind_{r(x) > 0} + c\Ind_{r(x) \leq 0}
    & = \Ind_{a \leq 0} \Ind_{r(x) > 0} + c\Ind_{r(x) \leq 0} \\
    & \leq \phi \paren*{\frac{\alpha}{2} (r(x) - a)} + c \psi \paren[\Big]{- \beta r(x)} \\
    & = e^{\frac{\alpha}{2}[r(x) - a]} + c e^{-\beta r(x)}.
\end{align*}
}
\else
{
\begin{align*}
    & \ell(r, x, a) = \Ind_{a \neq -1} \Ind_{r(x) > 0} + c\Ind_{r(x) \leq 0} \\
    & = \Ind_{a \leq 0} \Ind_{r(x) > 0} + c\Ind_{r(x) \leq 0} \\
    & \leq \phi \paren*{\frac{\alpha}{2} (r(x) - a)} + c \psi \paren[\Big]{- \beta r(x)} \\
    & = e^{\frac{\alpha}{2}[r(x) - a]} + c e^{-\beta r(x)}.
\end{align*}
}
\fi
}

\citet{CortesDeSalvoMohri2016} study the joint optimization problem
in~\ref{eq:rejection} where the predictor $f$ is a binary classifier, and define a convex surrogate loss upper-bounding their rejection loss $L$. We use the same technique to upper-bound the induced rejection loss. Specifically, the following inequality holds, where $\alpha$ and $\beta$ are positive parameters and $x \mapsto \Phi(-x)$ and $x \mapsto \Psi(-x)$ are convex functions upper-bounding $\Ind_{u \leq 0}$:
\begingroup 
\addtolength\jot{3pt} 
\begin{align*}
\label{eq:binaryrejection}
    \ell(r, x, a)  
    & = \Ind_{a \neq -1} \Ind_{r(x) > 0} + c\Ind_{r(x) \leq 0} \\
    & = \Ind_{a \leq 0} \Ind_{r(x) > 0} + c\Ind_{r(x) \leq 0} \\
    & \leq \max \curl*{\Ind_{a \leq 0} \Ind_{-r(x) < 0}, c\Ind_{r(x) \leq 0}} \\
    & \leq \max \curl*{\Ind_{\max (a, -r(x)) \leq 0}, c\Ind_{r(x) \leq 0}}.
\end{align*}
Next, since the maximum is
lower-bounded by the average, we can write:
\begin{align*}
    \ell(r, x, a)
    & \leq \max \curl*{\Ind_{ \frac{a -r(x)}{2}} \leq 0, c\Ind_{r(x) \leq 0}} \\
    & = \max \curl*{\Ind_{ \alpha \frac{a -r(x)}{2} \leq 0}, c\Ind_{\beta r(x) \leq 0}} \\
    & \leq \max \curl*{\phi \paren*{\tfrac{\alpha}{2} (r(x) - a}, c \psi \paren*{- \beta r(x)}} \\
    & \leq \phi \paren*{\tfrac{\alpha}{2} (r(x) - a} + c \psi \paren*{- \beta r(x)}.
\end{align*}
\ignore{
The sign of $yf(x)$ can be used to determine correctness in a binary classification setting, thereby replacing $f(x) \neq y$, and its magnitude can be interpreted as a measure of confidence. As discussed, measuring correctness in our case may require an expert human annotation. If we fix $f$ and only update $r$, we can manually annotate outputs of $f$ as correct ($+1$) or incorrect ($-1$). Note that doing so omits any sense of magnitude from $yf(x)$. In our proposed surogate loss function, we replace $yf(x)$ with the new
annotated label $a \in \set{-1, +1}$ and use the exponential function for $\phi$ and $\psi$, to derive a similar convex surrogate loss, this time directly upper-bounding the induced rejection loss:

\ifdim\columnwidth=\textwidth
\begin{align*}
    \ell(r, x, a) 
    = \Ind_{a \neq -1} \Ind_{r(x) > 0} + c\Ind_{r(x) \leq 0}
    & = \Ind_{a \leq 0} \Ind_{r(x) > 0} + c\Ind_{r(x) \leq 0} \\
    & \leq \phi \paren*{\frac{\alpha}{2} (r(x) - a)} + c \psi \paren[\Big]{- \beta r(x)} \\
    & = e^{\frac{\alpha}{2}[r(x) - a]} + c e^{-\beta r(x)}.
\end{align*}
\else
\begin{align*}
    & \ell(r, x, a) = \Ind_{a \neq -1} \Ind_{r(x) > 0} + c\Ind_{r(x) \leq 0} \\
    & = \Ind_{a \leq 0} \Ind_{r(x) > 0} + c\Ind_{r(x) \leq 0} \\
    & \leq \phi \paren*{\frac{\alpha}{2} (r(x) - a)} + c \psi \paren[\Big]{- \beta r(x)} \\
    & = e^{\frac{\alpha}{2}[r(x) - a]} + c e^{-\beta r(x)}.
\end{align*}
\fi
}
We use the exponential function for $\phi$ and $\psi$, giving our surrogate loss function:
\begin{equation*}
    \ell(r, x, a) \leq e^{\frac{\alpha}{2}[r(x) - a]} + c e^{-\beta r(x)}.
\end{equation*}
While the induced rejection loss $\ell(r, x, a)$ is provably NP-hard to optimize, our surrogate loss is convex and differentiable. A key insight is that models optimized with our loss function have a built-in threshold of 0; they are directly optimized for a \textit{specific} precision. Thus, there is no need to further specify some score threshold as in all methods previously described. In those methods, one can still target certain precision levels through the choice of threshold, but the \textit{underlying scores} are not necessarily favorable for that precision level. 

\ignore{
We observe that analyzing all coverage levels requires training and tuning hyper-parameters for several different values of the cost of rejection $c$. However, in practice, one is often only concerned with one coverage level, so this does not produce an added cost.  
}

While \citet{CortesDeSalvoMohri2016} prove theoretical guarantees for a joint minimization of their loss function in their binary setting, these naturally do not apply to our problem (the predictor or does not have zero error and is not the Bayes predictor). In the next section, we prove strong theoretical guarantees for minimizing our surrogate loss function. 

\section {$\sH$-consistency bound}
\label{sec:consistency-bound}

In this section we prove an $\sH$-consistency bound, a concept introduced by \citet{AwasthiMaoMohriZhong2022}, of our surrogate loss function
with respect to the rejection loss. To the best of our knowledge, these non-asymptotic bounds are the strongest guarantees known regarding the minimization of surrogate loss functions. We first introduce some basic concepts and adopt the notation of \citet*{AwasthiMaoMohriZhong2022}.

\subsection{Preliminaries}

Let $\sX$ denote the input space and $\sY = \curl*{-1,+1}$ the binary
label space.
We will denote by $\sD$ a distribution over $\sX \times \sY$. Let $\sR$ denote a family of rejection functions mapping from $\sX$ to
$\Rset$.
Then, the \emph{generalization error} $R_{\ell}(r)$ and
\emph{minimal generalization error} $R_{\ell, \sR}^*$ for a loss function $\ell(r, x, y)$
are defined by 
\[
R_{\ell}(r) = \E_{(x, y) \sim
  \sD}\bracket*{\ell(r, x, y)} \text{ and } R_{\ell, \sR}^* = \inf_{r \in
  \sR}R_{\ell}(r).
  \]
We will adopt the standard notation
for the conditional distribution of $Y = 1$ given $X = x$: $\eta(x) =
\sD(Y = 1 \!\mid\! X = x)$. The
generalization error can be expressed as
$R_{\ell}(r) = \E_{X}\bracket*{\sC_{\ell}(r,x)}$, where
$\sC_{\ell}(r,x)$ is the \emph{conditional $\ell$-risk} defined by
$\sC_{\ell}(r,x) = \eta(x)\ell(r, x, +1) + (1 - \eta(x))\ell(r, x,
-1)$.  The \emph{minimal conditional $\ell$-risk} is denoted by
$\sC_{\ell,\sR}^*(x) = \inf_{r\in \sR}\sC_{\ell}(r,x)$. We also use
the following shorthand for the gap $\Delta\sC_{\ell,\sR}(r,x) =
\sC_{\ell}(r,x) - \sC_{\ell,\sR}^*(x)$.

A key quantity that appears in their bounds is the \emph{$\paren*{\ell,
  \sR}$-minimizability gap} $\sM_{\ell,\sR}$, which is the difference
of the best-in class error and the expectation of the minimal
conditional $\ell$-risk: 
\[ \sM_{\ell,\sR}
 = R^*_{\ell,\sR} - \E_{X} \bracket* {\sC^*_{\ell,\sR}(x)}.
 \]  
 This is an inherent property of the hypothesis set $\sR$ and
 distribution $\sD$ that we cannot hope to estimate or minimize. As discussed later,
 the minimizability gap is zero when $\sR$ is the family of all measurable functions.

\subsection{Definition of the losses and the desired guarantee}

We consider the induced rejection loss function $\ell_2 = \ell$ defined for any rejection function or \emph{rejector} $r \colon \sX \to \Rset$ and $(x, a) \in \sX \times \set{-1, +1}$ by
\begin{align}
    \ell_2(r, x, a) = \Ind_{a=-1}\Ind_{r(x) > 0} + c \Ind_{r(x) \leq 0}.
\end{align}
We will consider a surrogate loss function $\ell_1$ parameterized by $\alpha, \beta > 0$ and defined for any rejection function or \emph{rejector} $r \colon \sX \to \Rset$ and $(x, a) \in \sX \times \set{-1, +1}$ by
\begin{align}
    \ell_1(r, x, a) =  e^{\frac{\alpha}{2}[r(x) - a]} + c e^{-\beta r(x)}.
\end{align}
We will prove $\sH$-consistency bounds for the surrogate loss $\ell_1$,
when $r$ is in the family of all measurable functions $\Rall$. These are excess error bounds of the form $R_{\ell_2}(r) - R^*_{\ell_2} \leq f(R_{\ell_1}(r) - R^*_{\ell_1})$ valid for all $r$ for an increasing function $f$. To do so, we will use the following general theorem from \citep*{AwasthiMaoMohriZhong2022}.

\begin{theorem}
\label{th:excess_bounds_Psi}
 Assume that there exists a convex function $\Psi\colon
 \Rset_+ \to \Rset$ with $\Psi(0) = 0$  such
 that the following holds for all $r\in \sR$ and $x\in \sX$:
$\Psi\paren*{\Delta\sC_{\ell_2,\sR}(r, x)} \leq \Delta\sC_{\ell_1,\sR}(r, x)$.
Then, the following inequality holds for any $r \in \sR$:
\ifdim\columnwidth=\textwidth
{\begin{equation*}
    \Psi\paren*{R_{\ell_2}(r) - \sR_{\ell_2, \sR}^* + \sM_{\ell_2, \sR}} \\
    \leq  R_{\ell_1}(r) - R_{\ell_1,\sR}^* + \sM_{\ell_1, \sR} .
\end{equation*}}
\else
{
\begin{multline*}
    \Psi\paren*{R_{\ell_2}(r) - \sR_{\ell_2, \sR}^* + \sM_{\ell_2, \sR}} \\
    \leq  R_{\ell_1}(r) - R_{\ell_1,\sR}^* + \sM_{\ell_1, \sR} .
\end{multline*}
}
\fi
\end{theorem}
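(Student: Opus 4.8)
The plan is to obtain the global inequality from the pointwise hypothesis in essentially one step: integrate the assumed inequality over $X$, push the convex function $\Psi$ through the expectation with Jensen's inequality, and then reduce both sides to the stated form purely by unwinding the definitions of the generalization error, the minimal error, and the minimizability gap. No extra ingredient is needed; the only thing to get right is the bookkeeping of the additive $\sM$ terms.

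Concretely, first I would rewrite the right-hand side. Using $R_{\ell_1}(r) = \E_X\bracket*{\sC_{\ell_1}(r, x)}$ together with the definition $\sM_{\ell_1, \sR} = R^*_{\ell_1, \sR} - \E_X\bracket*{\sC^*_{\ell_1, \sR}(x)}$, linearity of the expectation gives
\begin{equation*}
  \E_X\bracket*{\Delta\sC_{\ell_1, \sR}(r, x)} = R_{\ell_1}(r) - \E_X\bracket*{\sC^*_{\ell_1, \sR}(x)} = R_{\ell_1}(r) - R^*_{\ell_1, \sR} + \sM_{\ell_1, \sR}.
\end{equation*}
The same computation for $\ell_2$ gives $\E_X\bracket*{\Delta\sC_{\ell_2, \sR}(r, x)} = R_{\ell_2}(r) - R^*_{\ell_2, \sR} + \sM_{\ell_2, \sR}$, and this quantity is nonnegative since $\sC^*_{\ell_2, \sR}(x) = \inf_{r' \in \sR}\sC_{\ell_2}(r', x) \le \sC_{\ell_2}(r, x)$ pointwise, so it lies in the domain $\Rset_+$ of $\Psi$.

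Next I would handle the left-hand side. Taking expectations in the hypothesis $\Psi\paren*{\Delta\sC_{\ell_2, \sR}(r, x)} \le \Delta\sC_{\ell_1, \sR}(r, x)$ and using monotonicity of the expectation gives $\E_X\bracket*{\Psi\paren*{\Delta\sC_{\ell_2, \sR}(r, x)}} \le \E_X\bracket*{\Delta\sC_{\ell_1, \sR}(r, x)}$; convexity of $\Psi$ on $\Rset_+$ and nonnegativity of its argument then let me apply Jensen's inequality, $\Psi\paren*{\E_X\bracket*{\Delta\sC_{\ell_2, \sR}(r, x)}} \le \E_X\bracket*{\Psi\paren*{\Delta\sC_{\ell_2, \sR}(r, x)}}$. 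Chaining these two inequalities and substituting the two identities from the previous paragraph yields exactly
\begin{equation*}
  \Psi\paren*{R_{\ell_2}(r) - R^*_{\ell_2, \sR} + \sM_{\ell_2, \sR}} \le R_{\ell_1}(r) - R^*_{\ell_1, \sR} + \sM_{\ell_1, \sR},
\end{equation*}
the desired bound.

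The derivation itself is routine; the points that need a little care are measurability of $x \mapsto \sC^*_{\ell, \sR}(x)$ and of $x \mapsto \Psi\paren*{\Delta\sC_{\ell_2, \sR}(r, x)}$, so that the expectations and Jensen's inequality are legitimate (this holds under the mild regularity assumed throughout, e.g.\ measurable losses and a suitable $\sR$), plus the trivial case where the right-hand side is infinite. I also note that no monotonicity of $\Psi$ is required here — only convexity with $\Psi(0) = 0$ — since Jensen's inequality alone supplies $\Psi(\E[\cdot]) \le \E[\Psi(\cdot)]$; monotonicity would enter only later, when one inverts $\Psi$ to read off an excess-error bound on $R_{\ell_2}(r) - R^*_{\ell_2, \sR}$. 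So the real "obstacle" is conceptual rather than technical: keeping the $\sM$ terms on the correct sides, which is precisely what makes the bound valid for an arbitrary hypothesis set $\sR$, and not merely for the family of all measurable functions (where both minimizability gaps vanish).
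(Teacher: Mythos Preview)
Your argument is correct: integrate the pointwise hypothesis over $X$, apply Jensen's inequality for the convex $\Psi$, and rewrite both expectations via the identities $\E_X\bracket*{\Delta\sC_{\ell_i,\sR}(r,x)} = R_{\ell_i}(r) - R^*_{\ell_i,\sR} + \sM_{\ell_i,\sR}$. Note, however, that the paper does not give its own proof of this theorem --- it is quoted verbatim as a general tool from \citet{AwasthiMaoMohriZhong2022} --- so there is nothing to compare your approach against here; your derivation is precisely the standard one from that reference.
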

As shown by
\citet{steinwart2007compare}[lemma~2.5],
since 
$\ell_1(r, x, a)$
and $\ell_2(r, x, a)$
can be expressed in terms
of $r(x)$ and $a$ alone,
both minimizability gaps
$\sM_{\ell_{0-1},\Rall}$ and
$\sM_{\Phi,\Rall}$ vanish ($\sM_{\ell_{0-1},\Rall} = \sM_{\Phi,\Rall} = 0$).
To make use of this theorem, in the next sections, we will derive the expression
of the calibration gaps 
$\Delta\sC_{\ell_2}$ and 
$\Delta\sC_{\ell_1}$. 

\subsection{Calibration gaps}

The following gives the expression of the calibration gaps.
The proof for both results is deferred to the appendix.

\begin{restatable}{lemma}{CalibrationGapRejectionLoss}
\label{lemma:1}
The Bayes solution $r^*$ for the rejection loss can be expressed for all $x \in \sX$ by $r^*(x) = \eta(x) - (1 - c)$. The calibration gap for the rejection loss is given for any $r \in \Rall$ and $x \in \sX$ by
\[
\Delta \sC_{\ell_2} (r, x)
= |\eta(x) - (1 - c)| \Ind_{r(x) r^*(x) \leq 0}.
\]
\end{restatable}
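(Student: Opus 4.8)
The plan is to compute the conditional $\ell_2$-risk $\sC_{\ell_2}(r,x)$ in closed form and read off everything by inspection, exploiting the fact that this risk is piecewise constant in the sign of $r(x)$, so that no real optimization is needed.

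Concretely, I would first note that $\ell_2(r,x,+1) = c\,\Ind_{r(x)\le 0}$ (the factor $\Ind_{a=-1}$ kills the first term at $a=+1$) and $\ell_2(r,x,-1) = \Ind_{r(x)>0} + c\,\Ind_{r(x)\le 0}$. Plugging into $\sC_{\ell_2}(r,x) = \eta(x)\,\ell_2(r,x,+1) + (1-\eta(x))\,\ell_2(r,x,-1)$ and combining the two $c\,\Ind_{r(x)\le 0}$ contributions gives $\sC_{\ell_2}(r,x) = (1-\eta(x))\,\Ind_{r(x)>0} + c\,\Ind_{r(x)\le 0}$, which equals $1-\eta(x)$ when $r(x)>0$ and $c$ when $r(x)\le 0$. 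Hence $\sC^*_{\ell_2,\Rall}(x) = \min\{1-\eta(x),\,c\}$, and acceptance is (weakly) optimal exactly when $1-\eta(x)\le c$, i.e.\ $\eta(x)\ge 1-c$. This makes $r^*(x) = \eta(x) - (1-c)$ a Bayes rejector, since it is positive precisely where acceptance is strictly optimal and non-positive elsewhere, with $\eta(x)=1-c$ a harmless tie; this is the first assertion.

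For the calibration gap I would subtract and case-split on the sign of $r(x)$. If $r(x)>0$ then $\Delta\sC_{\ell_2}(r,x) = (1-\eta(x)) - \min\{1-\eta(x),c\} = \bigl((1-c)-\eta(x)\bigr)^+$, which is nonzero — and then equal to $|\eta(x)-(1-c)|$ — exactly when $\eta(x)<1-c$, i.e.\ when $r^*(x)<0$. Symmetrically, if $r(x)\le 0$ the gap is $\bigl(\eta(x)-(1-c)\bigr)^+$, nonzero and equal to $|\eta(x)-(1-c)|$ exactly when $\eta(x)>1-c$, i.e.\ when $r^*(x)>0$. So in every case the gap is $|\eta(x)-(1-c)|$ multiplied by the indicator that $r$ and $r^*$ disagree on the accept/reject decision, which is exactly what $\Ind_{r(x)r^*(x)\le 0}$ encodes.

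The one place requiring care is that last packaging step: one must thread the paper's convention that $r(x)\le 0$ means rejection consistently through all the sign comparisons, and check the boundary values $r(x)=0$ and $\eta(x)=1-c$ so that they collapse cleanly into the single expression $|\eta(x)-(1-c)|\,\Ind_{r(x)r^*(x)\le 0}$ (when $\eta(x)=1-c$ the prefactor vanishes, which absorbs the ambiguity there). Apart from that bookkeeping, the argument is elementary, precisely because $\sC_{\ell_2}(r,x)$ takes only the two values $1-\eta(x)$ and $c$.
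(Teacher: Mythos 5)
Your proposal is correct and follows essentially the same route as the paper's proof: compute $\sC_{\ell_2}(r, x) = (1 - \eta(x))\,\Ind_{r(x) > 0} + c\,\Ind_{r(x) \leq 0}$, identify $\sC^*_{\ell_2}(x) = \min\{c, 1 - \eta(x)\}$ and $r^*(x) = \eta(x) - (1 - c)$, then case-split on the sign of $r(x)$ against that of $r^*(x)$. Your explicit attention to the boundary cases $r(x) = 0$ and $\eta(x) = 1 - c$ is, if anything, slightly more careful than the paper's own write-up of the same argument.
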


\begin{restatable}{lemma}{CalibrationGapSurrogateLoss}
\label{lemma:2}
Let $I_\eta(x)$ be defined by
$I_\eta(x) =  \eta(x) e^{-\frac{\alpha}{2}} + (1 - \eta(x))e^\frac{\alpha}{2}$ and define $\gamma$ by $\gamma = \frac{\alpha}{\alpha + 2\beta}$. Then, the calibration gap for the surrogate loss is given by
\ifdim\columnwidth=\textwidth
{
\begin{align*}
\Delta \sC_{\ell_1} (r, x)
= e^{\frac{\alpha}{2} r(x)} I_\eta(x) + c e^{-\beta r(x)} 
- \frac{1}{1 - \gamma} \paren*{ \frac{2 \beta c}{\alpha} }^{\gamma}  I_\eta(x)^{1 - \gamma} .
\end{align*}
}
\else
{
\begin{align*}
\Delta \sC_{\ell_1} (r, x)
& = e^{\frac{\alpha}{2} r(x)} I_\eta(x) + c e^{-\beta r(x)} \\
& - \frac{1}{1 - \gamma} \paren*{ \frac{2 \beta c}{\alpha} }^{\gamma}  I_\eta(x)^{1 - \gamma} .
\end{align*}
}
\fi
\end{restatable}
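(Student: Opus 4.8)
The plan is to compute the conditional $\ell_1$-risk in closed form, minimize it pointwise over $\Rset$ (which is legitimate since $\Rall$ is the family of all measurable functions), and then subtract. First I would expand $\sC_{\ell_1}(r,x) = \eta(x)\,\ell_1(r,x,+1) + (1-\eta(x))\,\ell_1(r,x,-1)$, using $e^{\frac{\alpha}{2}[r(x)\mp 1]} = e^{\mp\alpha/2}\,e^{\frac{\alpha}{2}r(x)}$. The $c\,e^{-\beta r(x)}$ terms combine directly, and the remaining terms factor as $e^{\frac{\alpha}{2}r(x)}\bigl(\eta(x)e^{-\alpha/2} + (1-\eta(x))e^{\alpha/2}\bigr)$, so that $\sC_{\ell_1}(r,x) = e^{\frac{\alpha}{2}r(x)}\,I_\eta(x) + c\,e^{-\beta r(x)}$, which is exactly the first two terms in the claimed gap.

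Next I would minimize this over $r(x)$. Fixing $x$ and writing $I = I_\eta(x)$, which is a convex combination of $e^{\pm\alpha/2}$ and hence strictly positive, set $g(u) = e^{\frac{\alpha}{2}u}I + c\,e^{-\beta u}$ for $u \in \Rset$. Since $\alpha,\beta,c>0$ and $I>0$, $g$ is strictly convex and coercive, so it has a unique minimizer $u^*$ characterized by $g'(u^*) = 0$, i.e. $\tfrac{\alpha}{2}e^{\frac{\alpha}{2}u^*}I = \beta c\,e^{-\beta u^*}$, which gives $e^{(\frac{\alpha}{2}+\beta)u^*} = \tfrac{2\beta c}{\alpha I}$. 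Because $\Rall$ is all measurable functions, $\sC^*_{\ell_1,\Rall}(x) = g(u^*)$.

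Finally I would substitute back. From the stationarity relation, $e^{\frac{\alpha}{2}u^*} = \bigl(\tfrac{2\beta c}{\alpha I}\bigr)^{\gamma}$ and $e^{-\beta u^*} = \bigl(\tfrac{2\beta c}{\alpha I}\bigr)^{-(1-\gamma)}$, using $\gamma = \tfrac{\alpha}{\alpha+2\beta} = \tfrac{\alpha/2}{\alpha/2+\beta}$ and $1-\gamma = \tfrac{2\beta}{\alpha+2\beta}$. Plugging these in, both resulting terms are proportional to $\bigl(\tfrac{2\beta c}{\alpha}\bigr)^{\gamma} I^{1-\gamma}$, with coefficients $1$ and $\tfrac{\alpha}{2\beta}$ respectively, and $1 + \tfrac{\alpha}{2\beta} = \tfrac{\alpha+2\beta}{2\beta} = \tfrac{1}{1-\gamma}$. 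Hence $\sC^*_{\ell_1,\Rall}(x) = \tfrac{1}{1-\gamma}\bigl(\tfrac{2\beta c}{\alpha}\bigr)^{\gamma} I_\eta(x)^{1-\gamma}$, and $\Delta\sC_{\ell_1}(r,x) = \sC_{\ell_1}(r,x) - \sC^*_{\ell_1,\Rall}(x)$ is the stated expression.

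The one step requiring care is the exponent bookkeeping at the end: checking that the powers of $c$, $\alpha$, $2\beta$, and $I_\eta(x)$ collapse so that both terms share the factor $\bigl(\tfrac{2\beta c}{\alpha}\bigr)^{\gamma} I_\eta(x)^{1-\gamma}$, and that the coefficients sum to exactly $1/(1-\gamma)$. Everything else — the expansion of $\sC_{\ell_1}$, convexity and coercivity of $g$, and attainment of the infimum — is routine; I would just also note explicitly that $I_\eta(x)>0$ so the logarithmic/power manipulations are valid.
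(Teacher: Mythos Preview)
Your proposal is correct and follows essentially the same approach as the paper: expand $\sC_{\ell_1}(r,x)$ to $e^{\frac{\alpha}{2}r(x)}I_\eta(x)+c\,e^{-\beta r(x)}$, minimize pointwise over $r(x)$ via the first-order condition, and plug the minimizer back in to obtain $\sC^*_{\ell_1}(x)=\tfrac{1}{1-\gamma}\bigl(\tfrac{2\beta c}{\alpha}\bigr)^{\gamma}I_\eta(x)^{1-\gamma}$. Your write-up is in fact a bit more careful than the paper's in justifying attainment (coercivity, $I_\eta(x)>0$) and in spelling out the exponent bookkeeping that collapses the two terms into the single $\tfrac{1}{1-\gamma}$ factor.
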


\subsection{Bound}

In this section, we present our main result. 
A key challenge in finding a function $\Psi$ relating the two calibration gaps is that 
$\Delta \sC_{\ell_1}$ depends on the value $r(x) \in \Rset$, while $\Delta \sC_{\ell_2}$ only 
depends on the sign of $r(x)$, via that of $r^*(x)$.
The following provides a key solution to this 
problem.
\begin{figure}[t]
\centering
\includegraphics[scale=.35]{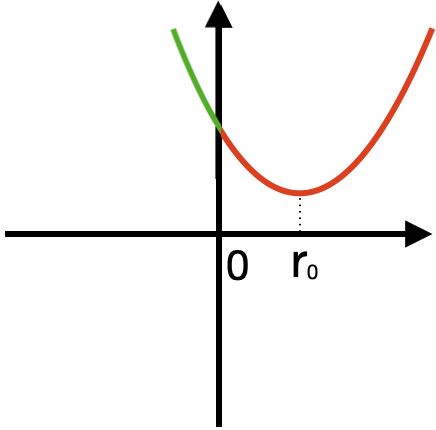}
\hspace{1cm}
\includegraphics[scale=.35]{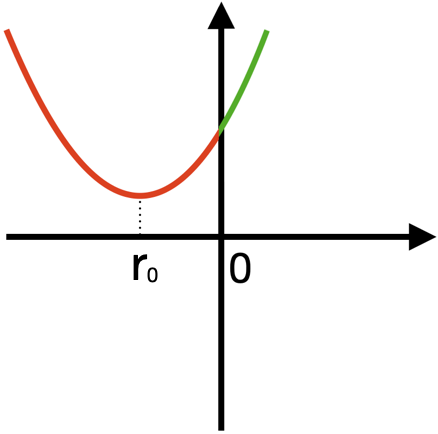}
\vskip -.1in
\caption{Lower bound for $\Delta C_{\ell_1}$. The green denotes the values $r(x)$ can take. \textbf{Left:} $r(x) \!\leq\! 0$; \textbf{Right:} $r(x) \!\geq\! 0$. In both cases, the infimum is attained at $r(x) = 0$.}
\vskip -.15in
\label{fig:estimates}
\end{figure}
\begin{restatable}{proposition}{PropositionCharacterization}
\label{prop:1}
 Assume that there exists a convex function $\Psi\colon
 \Rset_+ \to \Rset$ with $\Psi(0) = 0$  such
 that the following holds for all $r \in \Rall$ and $x\in \sX$:
$\Psi\paren*{\abs*{\eta(x) - (1 - c)} \Ind_{r(x) r^*(x) \leq 0}} \leq \Delta\sC_{\ell_1}(0, x)$.
Let $\Ic$ be defined by $\Ic = c e^{\frac{\alpha}{2}} + (1 - c) e^{-\frac{\alpha}{2}}$ and assume that $\frac{2 \beta c}{\alpha} = \Ic$.
Then, the following inequality holds for any $r \in \sR$:
\begin{equation}
     \Psi\paren*{R_{\ell_2}(r) - R_{\ell_2}^*}
     \leq  R_{\ell_1}(r) - R_{\ell_1}^* .
\end{equation}
\end{restatable}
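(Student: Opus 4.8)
The plan is to reduce the proposition to an application of Theorem~\ref{th:excess_bounds_Psi} with $\sR = \Rall$. For the family of all measurable functions the minimizability gaps $\sM_{\ell_1,\Rall}$ and $\sM_{\ell_2,\Rall}$ both vanish (by the remark following Theorem~\ref{th:excess_bounds_Psi}), so once the hypothesis of that theorem is established, its conclusion is exactly the claimed bound $\Psi(R_{\ell_2}(r) - R_{\ell_2}^*) \le R_{\ell_1}(r) - R_{\ell_1}^*$. Thus the entire argument comes down to verifying, for every $r \in \Rall$ and $x \in \sX$, the pointwise inequality $\Psi(\Delta\sC_{\ell_2}(r,x)) \le \Delta\sC_{\ell_1}(r,x)$.

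First I would substitute the formula from Lemma~\ref{lemma:1}: $\Delta\sC_{\ell_2}(r,x) = |\eta(x) - (1-c)|\,\Ind_{r(x)r^*(x) \le 0}$. If $r(x)r^*(x) > 0$, or if $\eta(x) = 1-c$, the left-hand side is $\Psi(0) = 0$, which is at most $\Delta\sC_{\ell_1}(r,x)$ since calibration gaps are nonnegative; so the only case requiring work is $r(x)r^*(x) \le 0$ with $\eta(x) \ne 1-c$, where I must show $\Psi(|\eta(x)-(1-c)|) \le \Delta\sC_{\ell_1}(r,x)$. Here I would invoke the proposition's hypothesis, which yields $\Psi(|\eta(x)-(1-c)|) \le \Delta\sC_{\ell_1}(0,x)$, thereby reducing everything to the single inequality $\Delta\sC_{\ell_1}(0,x) \le \Delta\sC_{\ell_1}(r,x)$ whenever $r(x)r^*(x) \le 0$.

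At this point the problem becomes one-dimensional. Writing $v = r(x)$ and using Lemma~\ref{lemma:2}, $\Delta\sC_{\ell_1}(r,x) = g(v)$ with $g(v) = I_\eta(x)e^{\alpha v/2} + c\,e^{-\beta v} - \tfrac{1}{1-\gamma}\bigl(\tfrac{2\beta c}{\alpha}\bigr)^{\gamma} I_\eta(x)^{1-\gamma}$. Since $g$ is a sum of two exponentials with positive coefficients plus a constant, it is strictly convex, hence decreasing on $(-\infty, v^*]$ and increasing on $[v^*,\infty)$ for its unique minimizer $v^*$; solving $g'(v^*) = 0$ gives $v^* = \tfrac{2}{\alpha+2\beta}\ln\tfrac{2\beta c}{\alpha I_\eta(x)}$, so $\operatorname{sgn}(v^*) = \operatorname{sgn}\!\bigl(\tfrac{2\beta c}{\alpha} - I_\eta(x)\bigr)$. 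Now the normalization $\tfrac{2\beta c}{\alpha} = \Ic$ enters through the identity $\Ic - I_\eta(x) = \bigl(\eta(x)-(1-c)\bigr)\bigl(e^{\alpha/2} - e^{-\alpha/2}\bigr)$; since $\alpha > 0$ the factor $e^{\alpha/2}-e^{-\alpha/2}$ is positive, so $\operatorname{sgn}(v^*) = \operatorname{sgn}(\eta(x)-(1-c)) = \operatorname{sgn}(r^*(x))$. Consequently, when $r^*(x) > 0$ we have $v^* > 0$ and $g$ is decreasing on $(-\infty,0]$, so $g(0) \le g(v)$ for all $v \le 0$; when $r^*(x) < 0$ we have $v^* < 0$ and $g$ is increasing on $[0,\infty)$, so $g(0) \le g(v)$ for all $v \ge 0$. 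In either case $\Delta\sC_{\ell_1}(0,x) \le \Delta\sC_{\ell_1}(r,x)$ on the feasible set $\{v : v r^*(x) \le 0\}$, which completes the verification of the theorem's hypothesis.

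The step I expect to be the crux is this sign analysis linking $v^*$ to $r^*(x)$: the whole proposition rests on the algebraic identity $\Ic - I_\eta(x) = (\eta(x)-(1-c))(e^{\alpha/2}-e^{-\alpha/2})$, which is precisely what makes $\tfrac{2\beta c}{\alpha} = \Ic$ the right choice, forcing the unconstrained minimizer of the surrogate conditional risk onto the same side of $0$ as the Bayes rejector $r^*$, so that projecting onto the feasible half-line always lands exactly at $r(x) = 0$ (the content of Figure~\ref{fig:estimates}). The remaining ingredients --- strict convexity of $g$, the computation of $v^*$, and the nonnegativity of $\Delta\sC_{\ell_1}$ --- are routine.
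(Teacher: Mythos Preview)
Your proposal is correct and follows essentially the same route as the paper's proof: both arguments reduce to Theorem~\ref{th:excess_bounds_Psi} (using that the minimizability gaps vanish for $\Rall$) and then establish $\inf_{r(x)r^*(x)\le 0}\Delta\sC_{\ell_1}(r,x)=\Delta\sC_{\ell_1}(0,x)$ by showing that the unconstrained minimizer $r_0(x)$ of the strictly convex conditional surrogate risk has the same sign as $r^*(x)$ under the assumption $\tfrac{2\beta c}{\alpha}=\Ic$. Your explicit identification of the identity $\Ic - I_\eta(x) = (\eta(x)-(1-c))(e^{\alpha/2}-e^{-\alpha/2})$ is exactly what underlies the paper's chain of equivalences, and your separate handling of the boundary case $\eta(x)=1-c$ is a minor clarification the paper leaves implicit.
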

The result shows that, instead, we only need to find a function
$\Psi$ relating the gap $\Delta \sC_{\ell_2}$ to $\Delta\sC_{\ell_1}(0, x)$, which is no
longer a quantity depending on $r(x)$.
To do so, we look to lower-bound $\Delta \sC_{\ell_1}$ over the infimum of $r(x)$. Since $\Delta \sC_{\ell_1}$ is a (strictly) convex function of $r(x)$, if we can select the parameter $\alpha$ and $\beta$ to ensure $r^*(x) \!>\! 0 \Leftrightarrow r_0(x) \!>\! 0$, where $r_0$ is the Bayes solution for the surrogate rejection loss, then this infimum occurs at $r(x) = 0$. This is illustrated in Figure~\ref{fig:estimates}. Proposition~\ref{prop:1} states that this can be arranged if $\alpha$ and $\beta$ are related by $\frac{2 \beta c}{\alpha} = \Ic$. In view of this proposition, we will adopt the assumption $\frac{2 \beta c}{\alpha} = \Ic$ and analyze $\Delta\sC_{\ell_1}(0, x)$. Note that the equivalence proven in the proof holds if and only if this equality holds.

\begin{theorem}
Let $\alpha, \beta > 0$ be such that
$\frac{2 \beta c}{\alpha} = \Ic$, where 
$\Ic = c e^{\frac{\alpha}{2}} + (1 - c) e^{-\frac{\alpha}{2}}$.
Then, the following inequality holds for any $r \in \Rall$:
\ifdim\columnwidth=\textwidth
{
\begin{align*}
    R_{\ell_2}(r) - R_{\ell_2}^* 
    \leq  \frac{2}{e^{\frac{\alpha}{2}} - e^{-\frac{\alpha}{2}}}\sqrt{\frac{(c +  \Ic)\Ic}{c} \paren*{R_{\ell_1}(r) - R_{\ell_1}^*} }  .
\end{align*} 
}
\else{
\begin{align*}
    & R_{\ell_2}(r) - R_{\ell_2}^* \\
    & \leq  \frac{2}{e^{\frac{\alpha}{2}} - e^{-\frac{\alpha}{2}}}\sqrt{\frac{(c +  \Ic)\Ic}{c} \paren*{R_{\ell_1}(r) - R_{\ell_1}^*} }  .
\end{align*} 
}
\fi
\end{theorem}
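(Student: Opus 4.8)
The plan is to instantiate the reduction of Proposition~\ref{prop:1} with an explicit quadratic $\Psi$ and then invert. Since we assume $\frac{2\beta c}{\alpha} = \Ic$, Proposition~\ref{prop:1} tells us that any convex $\Psi\colon \Rset_+ \to \Rset$ with $\Psi(0) = 0$ satisfying $\Psi\paren*{\abs*{\eta(x) - (1-c)}} \le \Delta\sC_{\ell_1}(0, x)$ for all $x$ yields $\Psi\paren*{R_{\ell_2}(r) - R_{\ell_2}^*} \le R_{\ell_1}(r) - R_{\ell_1}^*$ for every $r \in \Rall$. I would take $\Psi(t) = \kappa\, t^2$ with $\kappa = \frac{(e^{\alpha/2} - e^{-\alpha/2})^2\, c}{4(c + \Ic)\Ic}$: this is convex with $\Psi(0) = 0$, and its inverse is exactly $\Psi^{-1}(v) = \frac{2}{e^{\alpha/2} - e^{-\alpha/2}}\sqrt{\frac{(c+\Ic)\Ic}{c}\, v}$, so once the hypothesis of Proposition~\ref{prop:1} is checked we get $\kappa\paren*{R_{\ell_2}(r) - R_{\ell_2}^*}^2 \le R_{\ell_1}(r) - R_{\ell_1}^*$, and taking square roots gives the stated bound. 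Hence everything reduces to the pointwise estimate $\kappa\,(\eta(x) - (1-c))^2 \le \Delta\sC_{\ell_1}(0, x)$.

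To get a usable form of the right-hand side, I would plug $r(x) = 0$ into Lemma~\ref{lemma:2} and use $\frac{2\beta c}{\alpha} = \Ic$ (which also forces $\gamma = \frac{\alpha}{\alpha + 2\beta} = \frac{c}{c+\Ic}$, so $1-\gamma = \frac{\Ic}{c+\Ic}$ and $\frac{1}{1-\gamma}\Ic^{\gamma} = (c+\Ic)\Ic^{\gamma-1}$), obtaining $\Delta\sC_{\ell_1}(0, x) = I_\eta(x) + c - (c+\Ic)\paren*{I_\eta(x)/\Ic}^{1-\gamma} =: g\paren*{I_\eta(x)}$. The function $g$ is strictly convex on $\Rset_+$, and a short computation gives $g(\Ic) = 0$, $g'(\Ic) = 0$ (this is exactly the fact that $\eta = 1-c$ is the common zero of $r^*$ and of the Bayes surrogate rejector $r_0$ — the feature the relation on $\alpha,\beta$ was chosen to enforce; see Figure~\ref{fig:estimates}) and $g''(\Ic) = \frac{\gamma}{\Ic} = \frac{c}{(c+\Ic)\Ic}$. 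Using the affine identity $I_\eta(x) - \Ic = -(e^{\alpha/2} - e^{-\alpha/2})(\eta(x) - (1-c))$, the target inequality is equivalent to $g(I) \ge \tfrac14 g''(\Ic)\,(I - \Ic)^2$ for $I = I_\eta(x)$ ranging over $[\,e^{-\alpha/2},\, e^{\alpha/2}\,]$ — note the coefficient $\tfrac14 g''(\Ic)$ is one half of that of the second-order Taylor expansion of $g$ at $\Ic$, which reproduces exactly the constant $\kappa$.

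This last inequality is the crux. Normalizing $w = I/\Ic$ and setting $p = 1-\gamma = \frac{\Ic}{c+\Ic} \in (\tfrac12,1)$, it reads $p\,w + (1-p) - w^{p} \ge \frac{p(1-p)}{4}(w-1)^2$ for $w$ in the interval $[\,e^{-\alpha/2}/\Ic,\, e^{\alpha/2}/\Ic\,]$, which contains $1$ since $e^{-\alpha/2} < \Ic < e^{\alpha/2}$. I would split on the sign of $w-1$. For $w \le 1$ the comparison is clean: both sides and their first derivatives vanish at $w=1$, and the second derivative of the left side, $p(1-p)w^{p-2}$, is at least $p(1-p) = 2\cdot$(second derivative of the right side) throughout $w\le 1$, so the difference is convex with a double zero at $w=1$ and hence nonnegative. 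For $w \ge 1$ one writes the left side as $p(1-p)\int_1^{w}(w-t)\,t^{p-2}\,dt$ and must bound this integral below by $\tfrac14(w-1)^2$; since $t^{p-2} \le 1$ here, the naive estimate $\int_1^w (w-t)t^{p-2}\,dt \ge w^{p-2}\frac{(w-1)^2}{2}$ only suffices when $w^{2-p}\le 2$, so controlling the bound on the full admissible range $w \le e^{\alpha/2}/\Ic$ is the main obstacle: because $g$ is only sub-quadratic far from $\Ic$, one must use the precise form of the upper endpoint together with the relation $\frac{2\beta c}{\alpha}=\Ic$, and it is here that the factor $\tfrac14$ (rather than the $\tfrac12$ of the pure Taylor estimate) becomes essential. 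With that estimate in hand, Proposition~\ref{prop:1} and a square root finish the proof.
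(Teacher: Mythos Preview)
Your route is the paper's: Proposition~\ref{prop:1} reduces everything to the pointwise bound $\kappa\,(\eta-(1-c))^2 \le \Delta\sC_{\ell_1}(0,x)$, and after your normalization this is exactly the Bernoulli-type inequality $pw + (1-p) - w^p \ge \tfrac{p(1-p)}{4}(w-1)^2$, which the paper packages as Lemma~\ref{lemma:identity} (set $x = w-1$, $r = p$). Your convexity argument for $w \le 1$ is correct and in fact handles a regime the paper's lemma, stated only for $x \in (0,1)$, leaves implicit.

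The obstacle you flag for $w > 1$ is, however, a genuine obstruction rather than a missing detail: the pointwise inequality is \emph{false} on the full admissible range in general. The left side grows sublinearly in $w$ while the right is quadratic, so it must eventually fail; Lemma~\ref{lemma:identity} is proved only for $x\in(0,1)$ and does not extend beyond $x=1$. The range actually needed is $w \le w_+ := e^{\alpha/2}/\Ic$, and one checks $w_+\le 2$ iff $(1-2c)\,e^{\alpha}\le 2(1-c)$ --- automatic when $c\ge\tfrac12$, but a real constraint on $\alpha$ otherwise. For instance with $c=0.05$, $\alpha=4$ (the paper's experimental values) one gets $w_+\approx 14.8$, and at $\eta=0$ a direct computation gives $\Delta\sC_{\ell_1}(0,x)\approx 1.08$ while $\kappa\,(\eta-(1-c))^2\approx 2.18$, so the desired inequality is violated. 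Hence no refinement ``using the precise form of the upper endpoint together with the relation $\tfrac{2\beta c}{\alpha}=\Ic$'' can rescue this particular $\Psi$; to close the argument one must either weaken the constant in the theorem or add the hypothesis $e^{\alpha/2}\le 2\Ic$. The paper's own proof applies Lemma~\ref{lemma:identity} with $x=J_u/\Ic$ without verifying that this quantity lies in $(0,1)$, so it shares exactly the gap you identified.
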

The theorem shows that if
the excess surrogate loss $\paren*{R_{\ell_1}(r) - R_{\ell_1}^*}$ is
reduced to $\e$, then the excess rejection loss
$\paren*{R_{\ell_2}(r) - R_{\ell_2}^*}$ is bounded by $O(\sqrt{\e})$. This provides a strong guarantee for the surrogate loss function considered when the condition $\frac{2 \beta c}{\alpha} = \Ic$ holds. Similar results can be derived for other family of functions $\sR$, such as that of linear functions or neural networks with one hidden-layer as in \citep*{AwasthiMaoMohriZhong2022}. 
This gives a \textit{principled} method for defining the relation between $\alpha$
and $\beta$. The value of the other parameter, say $\alpha$, can be set arbitrarily or via a hyper-parameter search.
\ignore{
We recommend a hyper-parameter search over $\alpha$ and $\beta$, but this gives a \textit{principled} method for defining the relation between these two parameters. 
}

\subsection{Visualization of surrogate loss}

In Figure~\ref{fig:loss_plt}, we plot our surrogate loss as a function of $r(x)$ on $(-0.5, +0.5)$, and arbitrarily choose $c=0.05$ and $\alpha=2$ with $\beta$ following the relationship defined in $\frac{2 \beta c}{\alpha} = \Ic$. We include the plot for both negatively-annotated points ($a = -1$) and positively-annotated points ($a = +1$). The first term is always increasing, and the second is always decreasing. 

We observe the following property: for negatively-annotated points, the minimum is attained at $r(x) < 0$, and for positively-annotated points, while it may be difficult to see, the minimum is attained at $r(x) > 0$. 
The following is a key insight from our $\sH$-consistency bound: this property holds for any $c$, and any $\alpha$ and $\beta$ satisfying $\frac{2 \beta c}{\alpha} = \Ic$, as the signs of $r^*(x)$ and $r_0(x)$ match. 
This relationship thus ensures that the minimums of both plots are in the proper regions.

\begin{figure}[t]
\centering
\includegraphics[scale=.29]{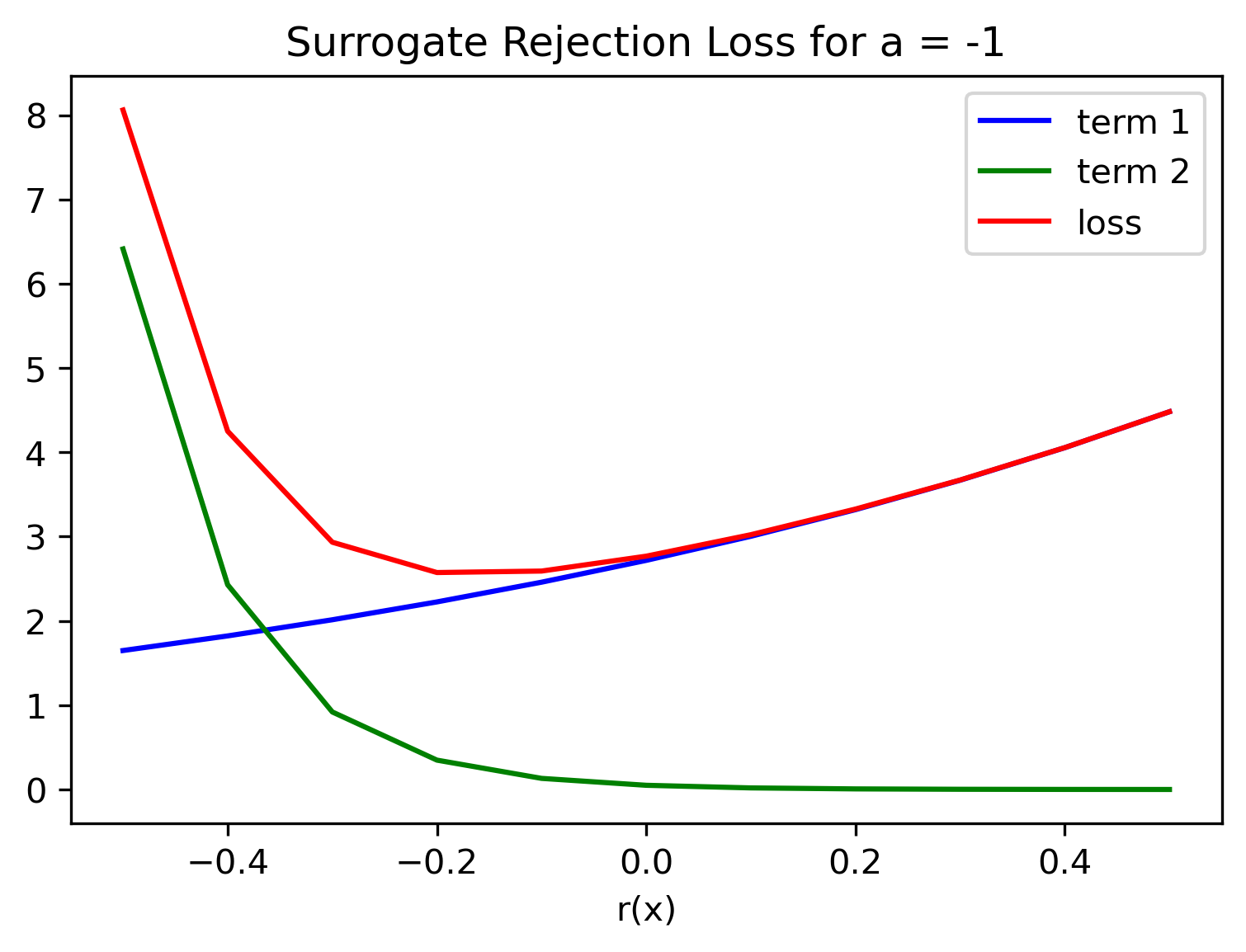}
\includegraphics[scale=.29]{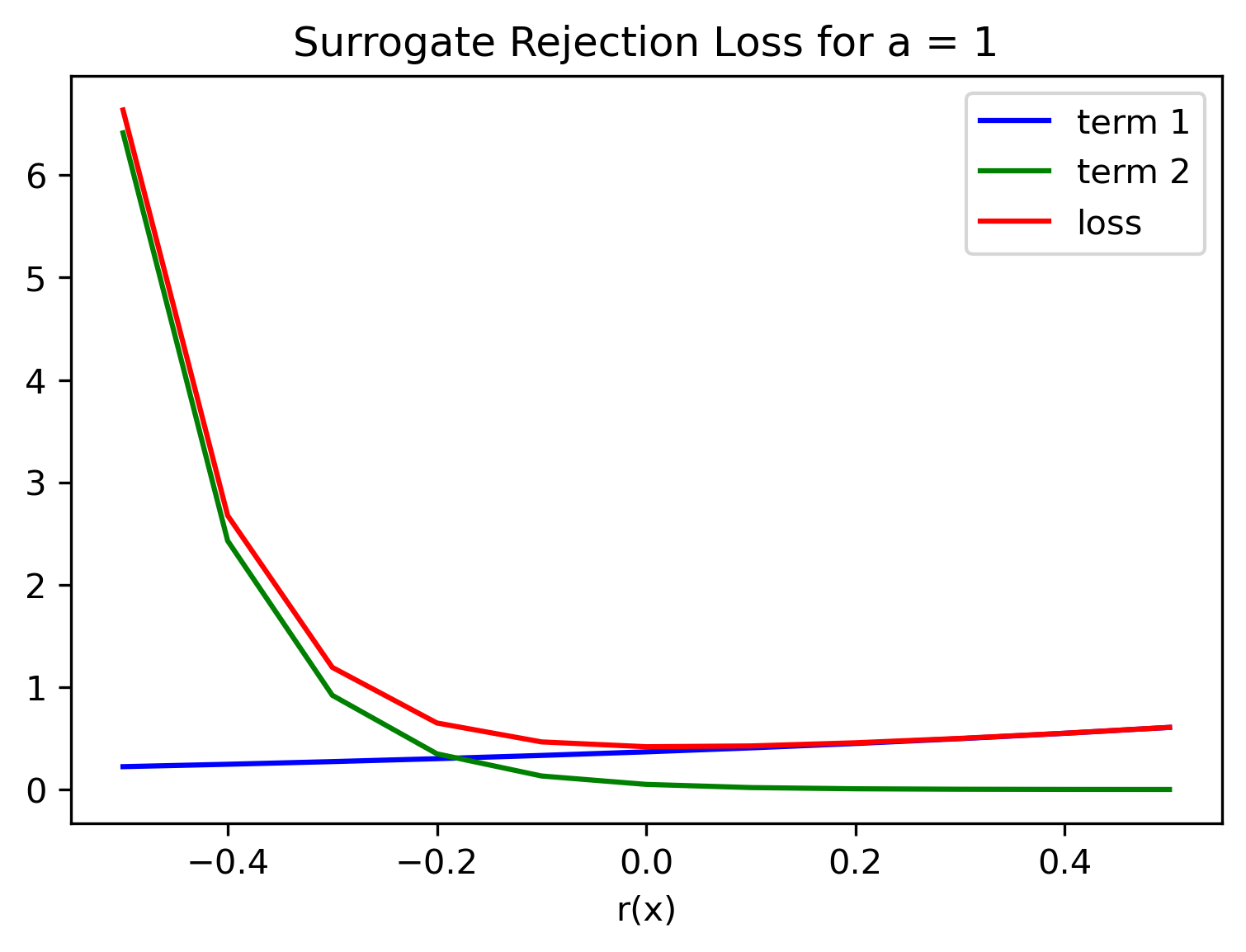}
\vskip -.1in
\caption{Surrogate rejection loss as a function of $r(x)$ when $c=0.05, \alpha=2$, and $\beta$ following $\frac{2 \beta c}{\alpha} = \Ic$, with both terms in the sum. \textbf{Left:} negatively-labelled points (-1). \textbf{Right:} positively-labelled points (+1). }
\vskip -.15in
\label{fig:loss_plt}
\end{figure}

\section{Decontextualization task}
\label{sec:decontext-task}

We chose the NLP task of decontextualization for evaluation. This is a challenging task because only a modest amount of annotated data is available and because each input typically 
admits a large number of correct labels. We give a short description of the task and our annotation procedure.

\ignore{\ec{can we find some statistics to back it up? maybe binary match accuracy among annotation pairs? also this is probably true for many text generation tasks, might worth mentioning somewhere}}

\subsection{Definition}

Decontextualization is the task of editing a sentence within a passage 
so that it can be interpretable out of context \citep{Collins2021}.
Specifically, given a sentence-context pair $(s,c)$, a sentence $s'$ is a valid 
decontextualization of $s$ if: (1) the sentence $s'$ is interpretable in the empty 
context; and (2) the truth-conditional meaning of $s'$ in the empty context is 
the same as the truth-conditional meaning of $s$ in context $c$. We refer readers to
\citep{Collins2021} for a full description. \ignore{This task is highly non-deterministic: there could be a large number of valid decontextualizations.}

\subsection{Annotation}

For our experiments, we labeled $2\mathord,000$ decontextualizations 
of a fixed MT5 XXL model \citep{MT5} ourselves, fine-tuned on the decontextualization
task. The training data for the original decontextualization task is a sample from the 
English portion of the Wikipedia corpus \citep{Collins2021}. The input is formed
by concatenating the title and subtitle of the relevant page with `[HEAD]', and then appending the relevant paragraph
after `[SEP]', see Figure~\ref{fig:title-example}.
Our 2,000 annotated decontextualizations 
are originally from a random sample of English Wikipedia. Examples that the model labelled as `impossible' 
or `unnecessary' were not considered. We observe that annotating for this task is difficult: some take several minutes to evaluate.

\begin{figure}
\centering
\examplebox{
    Goldie Taylor [HEAD] Career ; Corporate [SEP] Taylor has worked for the Sara Lee Corporation as director of global communications and public affairs. \textbf{ \underline{Goldie} Taylor has served as executive consultant to NBC News and CNN Worldwide.}}
\caption{Decontextualization labeled as `title'.}
\vskip -.15in
\label{fig:title-example}
\end{figure}

When labeling or evaluating the validity of a decontextualization,
we consider the correctness of the edits: the added information must be correct, and the deletions
cannot change the meaning of the sentence. Sometimes, however, 
this is impossible to discern without using information from the title or subtitle. We thus labeled the outputs as `yes', `no' or `title'. We present a `title' example in Figure~\ref{fig:title-example}. The decontextualization request is for the bolded sentence, and `Goldie' is then inserted. However, 
`Goldie' only appears in the title. While it is probable that `Taylor' refers to `Goldie
Taylor', we must rely on the information from the title. It is however also possible that `Taylor' refers to a family member of `Goldie Taylor' and that the paragraph is entirely unrelated to the title.

In our case, since `title' examples are likely to be factual (while unsupported by the context provided to the model), we evaluate experimentally by including them with `yes'. In other setting such as novels, `title' examples are less likely to be factual, as paragraphs deep within a chapter have little connection to their title. 

\section{Experimental evaluation}
\label{sec:experiments}

In this section, we report results for the described learning methods.

\subsection{Dataset}

We randomly split our 2,000 annotations into 1,500 train/500 validation examples and perform 4-fold cross-validation. 1,019 (50.95\%) of the annotations are `yes', 761 (38.05\%) are `title', and the remaining 220 (11.00\%) are `no.' As already mentioned, we consider the `title' examples as `yes', so we have about 89\% positive examples. The \textit{decontextualization rejection task} is constructed as follows: we concatenate the input and output of the decontextualization model with ` [OUT] ' to form the input. The target consists of just one token, `yes' or `no.'

\ignore{
For all experiments we use a T5X 1.1 XXL decontextualization model \citep{roberts2022t5x} trained on .... data. This model is a transformer architecture with an input length of X and an output length of Y in addition to the yes/no classification task output unit. We limit training to the 'yes' token.  }

\subsection{Methods}

\noindent\textbf{Maxprob}: We use the highest scoring output sequence as determined by beam search as the output sequence and use the sum of the logits for each token of that sequence as the score. The best threshold for some precision level is determined on the training data, and then evaluated on the validation data. 

\noindent\textbf{Cross-entropy loss}: We further fine-tune a T5X 1.1 XXL decontextualization model \citep{roberts2022t5x}, limited to one output token, on the decontextualization rejection task, and use as the score the value of the  $\logits_{yes}$. The standard cross-entropy loss function is used, and a threshold is similarly fitted on half of the validation data and evaluated on the other half. We perform a hyper-parameter search over $\{1e-4, 1e-3, 1e-2\}$ for the learning rate, and $\{0, 0.05, \ldots, 0.2\}$ for the dropout rate.

\noindent\textbf{Surrogate loss}:
In our formulation, we have a rejector $r\colon \sX \to \Rset$, different from the sequence output of the T5X \ignore{1.1 XXL }model described. Thus, we use $\softmax ( \logits )_{yes} \!-\! 0.5$ as the value of $r(x)$. We further fine-tune the same T5X 1.1 XXL decontextualization model, but with our surrogate loss function. For the two most extreme rejection levels presented, we maintain the value of $c$ but fit a threshold slightly different from 0. We set $\alpha$ to 4, and do not perform a hyper-parameter search.

\noindent\textbf{Theoretical limit}: The theoretical limit of coverage can be defined as $b / p$, where $b$ is the fraction of positively labeled points and $p$ is the desired precision level. The precision is obtained exactly, and standard deviations for coverage are a result of the slightly different class imbalances in the cross-validation splits. 

\begin{figure}[t]
\centering
\includegraphics[scale=0.33]{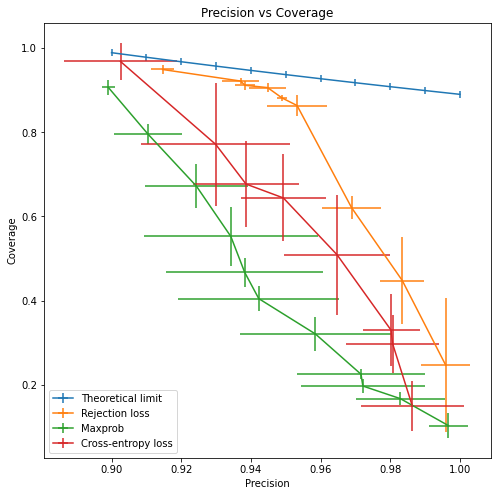}
\vskip -.1in
\caption{Precision vs Coverage on Decontextualization. Standard deviations are from the four cross-validation splits.}
\vskip -.15in
\label{fig:pc}
\end{figure}

\subsection{Discussion}

We observe that our rejection loss significantly outperforms the baselines considered, 
and for the lower half of the precision levels, \textit{closely follows the theoretical
limit}. We provide, at $c=0.07$ for example, about a halving of the error rate
(11.00\% to 5.1\%) while maintaining a broad 90.6\% coverage. The theoretical limit for 5\% 
error is only slightly higher at 93.6\% coverage, and the closest baseline, the cross-entropy loss, 
only provides  $\sim\!\!64.4\%$ coverage at $\sim\!\!5.1\%$ error. 

Another important observation is the stability of this result. Not only does the surrogate loss
perform much better on average, the standard deviations are also significantly smaller. This gives a better sense of the performance of an individual rejector model trained with the surrogate loss function.

\ignore{
Why do we believe the surrogate loss performed well on decontextualization? 
There are many correct outputs, and thus a rich rejection space we are \textit{properly} minimizing over.
This is in addition a difficult rejection task: even when annotating, we found a large fraction of examples took several minutes to evaluate. 
For easier tasks, such as collapsing `title' to `yes', where the evidence for a rewrite is often immediately next to it, we observe that the trained methods all have AUC $> 0.90$.
Thus, while we observe that our surrogate loss can be used for other tasks, and only requires binary labels indicating correctness, we caution the reader against expecting improvement when the task is easy.  
}

\section{Conclusion}

\ignore{We introduced a new method for classification with a reject option while maintaining a fixed predictor, that achieves remarkable results on the decontextualization task and benefits from strong theoretical guarantees. 
It simply requires labeling a moderate amount of data and then training a binary classifier with a modified loss function.

We observe that our algorithm can be used in other settings where a binary label indicating correctness of a prediction is available. We encourage the use of our algorithm in difficult rejection tasks with a large output space and a large number of correct labels. In particular, our algorithm can be used for abstention with pre-trained large language models, 
in a variety of contexts.}

We presented a theoretically-justified approach to classification with a reject option for applications where the predictor remains fixed. Our main contributions include the following: 
(1)  a new formulation of the problem of learning a rejector with fixed predictor (for cases where there may be many correct labels);
(2) introduction of a new surrogate loss function for our scenario, with the proof of a strong $H$-consistency bound guarantee;
(3)  definition of the notion of correctness for decontextualization, and its use to provide a dataset of $2\mathord,000$ manually-labeled decontextualizations;
(4) experimental results demonstrating a $10$-$25\%$ improvement over baselines in coverage at various precision levels on decontextualization.

We observe that our algorithm can be used in other settings where a binary label indicating correctness of a prediction is available. We encourage the use of our algorithm in difficult rejection tasks with a large output space and a large number of correct labels. In particular, our algorithm can be used for abstention with large language models, 
in a variety of contexts such as text generation.

\subsection*{Acknowledgments}
We warmly thank Anqi Mao and Yutao Zhong for discussions related to an earlier version of this manuscript and clarifications related to $\sH$-consistency.

\newpage
\bibliography{dcont}
\bibliographystyle{abbrvnat}

\newpage
\appendix
\onecolumn

\ignore{
\renewcommand{\contentsname}{Contents of Appendix}
\tableofcontents
\addtocontents{toc}{\protect\setcounter{tocdepth}{3}} 
\clearpage
}

\label{app:consistency_proof}
\section{$\sH$-Consistency bound proof}

\subsection{Calibration gap for rejection loss}

The following gives the expression of the calibration gap $\Delta \sC_{\ell_2}$.

\CalibrationGapRejectionLoss*

\begin{proof}
For any $r \in \Rall$ and $x \in \sX$, we can write
\begin{align*}
    \sC_{\ell_2}(r, x) & = \eta(x) \ell_2(r, x, +1) + [1 - \eta(x)] \ell_2(r, x, -1) \\
    & = \eta(x) \left[\Ind_{+1=-1}\Ind_{r(x) > 0} + c \Ind_{r(x) \leq 0} \right] + [1 - \eta(x)] \left[\Ind_{-1=-1}\Ind_{r(x) > 0} + c \Ind_{r(x) \leq 0}\right] \\
    \ignore{
    & = \eta(x) \left[c \Ind_{r(x) \leq 0} \right] + [1 - \eta(x)] \left[\Ind_{r(x) > 0} + c \Ind_{r(x) \leq 0}\right] \\
    & = \eta(x) \left[c \Ind_{r(x) \leq 0} \right] + [1 - \eta(x)] \left[\Ind_{r(x) > 0} \right] + [1 - \eta(x)] \left[ c \Ind_{r(x) \leq 0}\right] \\}
    & = c \Ind_{r(x) \leq 0}  + [1 - \eta(x)] \Ind_{r(x) > 0}.
\end{align*}
For the optimal $\sC_{\ell_2}^*$, we would always pick the lower of $c$ or $1 - \eta(x)$, which gives:
$\sC_{\ell_2}^*(x) = \min \{c, 1 - \eta(x)\}$.
The corresponding Bayes solution $r^*$ can be defined by $r^*(x) = \eta(x) - (1 - c)$.
Thus, the calibration gap is given by
\begin{align*}
    \Delta \sC_{\ell_2}(r, x) = c \Ind_{r(x) \leq 0}  + [1 - \eta(x)] \Ind_{r(x) > 0} - \min \curl*{c, 1 - \eta(x)}.
\end{align*}
If $r(x)$ correctly chooses the lower of the two, we have $r(x)r^*(x) > 0$ and then $\Delta \sC_{\ell_2} = 0$. Otherwise, we have
\begin{align*}
    \Delta \sC_{\ell_2}(r, x) 
    & =  
    \begin{cases} 
      c - (1-\eta(x)) & \text{if } r(x) \leq 0 \\
      (1-\eta(x)) - c & \text{otherwise} \\
   \end{cases}.
\end{align*}
Thus, for all $x \in \sX$, we have
   $\Delta \sC_{\ell_2}(r, x) = \abs*{\eta(x) - (1 - c)} \Ind_{r(x) r^*(x) \leq 0}$.
This completes the proof.
\end{proof}

\ignore{
Note, LHS incorporates $\e$ now - I believe this is correct.
}

\subsection{Calibration gap for surrogate loss}

Here, we analyze the calibration gap for the surrogate loss.

\CalibrationGapSurrogateLoss*

\begin{proof}
By definition, the calibration function for $\ell_1$ can be expressed for all $x \in \sX$ by
\begin{align*}
    \sC_{\ell_1}(r, x) & = \eta(x) \ell_1(r, x, +1) + [1 - \eta(x)] \ell_1(r, x, -1) \\
    & = \eta(x) \left[e^{\frac{\alpha}{2}[r(x) - 1]} + ce^{-\beta r(x)} \right] 
    + [1 - \eta(x)] \left[e^{\frac{\alpha}{2}[r(x) + 1]} + ce^{-\beta r(x)}\right] \\
    & = e^{\frac{\alpha}{2} r(x)} I_\eta(x) + c e^{-\beta r(x)}.
\end{align*}
Since the exponential function is convex, $\Delta \sC_{\ell_1} (r, x)$ is a convex function of $r(x)$. Thus,
for $r \in \sR_{all}$, we obtain the minimum $r_0(x)$ by differentiating with respect to $r(x)$ and setting to 0:
\begin{align*}
    & \frac{\alpha}{2} e^{\frac{\alpha}{2} r(x)} I_\eta(x) - \beta ce^{-\beta r(x)} = 0 
    \Leftrightarrow r_0(x) =  \log  \bracket*{ \paren*{\frac{2 \beta c}{\alpha I_\eta(x)} }^{\frac{2}{2 \beta + \alpha}} }.
\end{align*}
Plugging in this expression in $\sC$ gives 
the corresponding minimal calibration $\sC^*_{\ell_1}(x)$:
$ \sC^*_{\ell_1}(x)
    = \left[ \left( \frac{2 \beta c}{\alpha} \right)^{\gamma} \right]  I_\eta(x)^{1-\gamma} \left(\frac{1}{1-\gamma}\right)$.
This completes the proof.
\end{proof}

\subsection{$\sH$-consistency bound}

In this section, we prove our main result. The following will provide a key tool to derive our result.

\ignore{
\begin{figure}[t]
\centering
\includegraphics[scale=.35]{Proposition_1_0.png}
\hspace{1cm}
\includegraphics[scale=.35]{Proposition_1_1.png}
\vskip -.1in
\caption{Lower-bound for $\Delta C_{\ell_1}$. The green denotes the values $r(x)$ can have. \textbf{Left:} $r(x) < 0$. \textbf{Right:} $r(x) > 0$. In both cases, the infimum is attained at r(x) = 0.}
\end{figure}
}

\PropositionCharacterization*
\begin{proof}
We will show that $\inf_{r(x) r^*(x) \leq 0} \Delta \sC_{\ell_1}(r, x) = \Delta \sC_{\ell_1}(0, x)$. The result then follows by Theorem~\ref{th:excess_bounds_Psi} and Lemma~\ref{lemma:1}. Since we have $\frac{2 \beta c}{\alpha} = \Ic$, the following equivalence holds:
\begin{align*}
r_0(x) > 0 
\Leftrightarrow \frac{2 \beta c}{\alpha I_\eta(x)} > 1
\Leftrightarrow I_\eta(x) < \Ic
\Leftrightarrow \eta(x) > \tfrac{e^{\frac{\alpha}{2}} - \Ic}{e^{\frac{\alpha}{2}} - e^{-\frac{\alpha}{2}}}
\Leftrightarrow \eta(x) > \tfrac{(1 - c)e^{\frac{\alpha}{2}} - (1 - c)e^{-\frac{\alpha}{2}} }{e^{\frac{\alpha}{2}} - e^{-\frac{\alpha}{2}}}
\Leftrightarrow r^*(x) > 0.
\end{align*}
This implies $\inf_{r(x) r^*(x) \leq 0} \sC_{\ell_1}(r, x)
= \inf_{r(x) r_0(x) \leq 0} \sC_{\ell_1}(r, x)$.
Now, since $r_0(x)$ is the unique minimizer of the strictly convex function
$\sC_{\ell_1}(r, x)$ of $r(x)$, then,
as a function of $r(x)$, $\sC_{\ell_1}(r, x)$ is decreasing from $-\infty$ to 
$r_0(x)$ and increasing from there to $+\infty$.
Thus, if $r_0(x) > 0$,
the infimum of $\sC_{\ell_1}(r, x)$ 
over $r(x) \leq 0$ is reached for
$r(x) = 0$. Similarly, if 
$r_0(x) < 0$, the infimum of $\sC_{\ell_1}(r, x)$ 
over $r(x) \geq 0$ is reached for
$r(x) = 0$. This shows that
$\inf_{r(x) r_0(x) \leq 0} \sC_{\ell_1}(r, x)
= \sC_{\ell_1}(0, x)$,
and completes the proof.
\end{proof}
\ignore{
In view of this proposition, we will adopt the assumption 
$\frac{2 \beta c}{\alpha} = \Ic$ and analyze 
$\Delta\sC_{\ell_1}(0, x)$. Note that the equivalence proven in the proof holds if and only if this equality holds.
}
The proof of our main result makes use of the following identity, which is a refinement of Bernoulli's inequality. The result could be of independent interest in other contexts, we give a concise proof below.

\begin{lemma}[Bernoulli-type inequality]
\label{lemma:identity}
The following identity holds for all $x, r \in (0, 1)$,
\[
(1 + x)^r \leq 1 + rx + \frac{r(r - 1)x^2}{4}.
\]
\end{lemma}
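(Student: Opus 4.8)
The plan is to prove the claim by reducing to a one-variable calculus problem. Fix $r \in (0,1)$ and define $g(x) = 1 + rx + \tfrac{r(r-1)x^2}{4} - (1+x)^r$ on $[0,1]$; I want to show $g(x) \geq 0$ there. First I would record the boundary/derivative data at $x = 0$: clearly $g(0) = 0$, and $g'(x) = r + \tfrac{r(r-1)x}{2} - r(1+x)^{r-1}$, so $g'(0) = r - r = 0$ as well. Thus it suffices to control the second derivative, $g''(x) = \tfrac{r(r-1)}{2} - r(r-1)(1+x)^{r-2} = r(r-1)\bigl[\tfrac12 - (1+x)^{r-2}\bigr]$. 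Since $r \in (0,1)$ we have $r(r-1) < 0$, so $g''(x) \geq 0$ exactly when $(1+x)^{r-2} \geq \tfrac12$, i.e. when $(1+x)^{2-r} \leq 2$.

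The next step is to check that this holds throughout $[0,1]$: since $2 - r \in (1,2)$ and $1 + x \leq 2$, we get $(1+x)^{2-r} \leq 2^{2-r} \leq 2^{2} $, which is not tight enough directly, so I would instead argue $(1+x)^{2-r} \le 2^{2-r} < 2^{1}\cdot 2^{1-r} $ — wait, cleaner: $2 - r < 2$ gives $2^{2-r} < 4$, still not $\le 2$. So $g''$ is \emph{not} nonnegative on all of $[0,1]$; near $x=1$ with $r$ close to $0$, $(1+x)^{2-r} \approx 2^2 = 4 > 2$, so $g'' < 0$ there. Hence the simple convexity argument fails and I need a slightly more careful analysis. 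The right approach is: $g''$ changes sign at most once on $[0,1]$ (since $(1+x)^{r-2}$ is monotincreasing in $x$... actually decreasing, as $r-2<0$), precisely $g''(x) \ge 0$ for $x \le x_0$ and $g''(x) \le 0$ for $x \ge x_0$ where $1 + x_0 = 2^{1/(2-r)}$. So $g'$ increases then decreases on $[0,1]$; combined with $g'(0) = 0$, this means $g'$ is first $\geq 0$ then eventually possibly negative, so $g$ increases and then may decrease. Therefore the minimum of $g$ on $[0,1]$ is attained at an endpoint, and $g(0) = 0$, so it remains only to verify $g(1) \geq 0$, i.e.
\[
(1+1)^r = 2^r \le 1 + r + \frac{r(r-1)}{4},
\]
equivalently $2^r \leq \tfrac{r^2 + 3r + 4}{4}$ for $r \in (0,1)$. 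This is a clean one-variable inequality: both sides equal $1$ at $r = 0$ and equal $2$ at $r = 1$, and one can check, e.g. by comparing derivatives or by noting $2^r$ is convex while the quadratic bound dominates at the endpoints (a convex function on $[0,1]$ lies below the... no — $2^r$ convex lies \emph{below} its chord, and the chord from $(0,1)$ to $(1,2)$ is $1+r$, which is $\le \tfrac{r^2+3r+4}{4}$ since $r^2 - r \le 0$). That settles it: $2^r \le 1 + r \le 1 + r + \tfrac{r(r-1)}{4}\cdot(-1)$... careful with the sign: $\tfrac{r(r-1)}{4} \le 0$, so actually $1 + r + \tfrac{r(r-1)}{4} \le 1+r$, the wrong direction. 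So I would instead verify $2^r \le \tfrac{r^2+3r+4}{4}$ directly: let $h(r) = \tfrac{r^2+3r+4}{4} - 2^r$; then $h(0) = h(1) = 0$, $h''(r) = \tfrac12 - (\ln 2)^2 2^r$, which is positive for small $r$ and negative for $r$ near $1$ (since $(\ln2)^2\cdot 2 \approx 0.96 > 0.5$), so $h$ is convex-then-concave, hence $h \ge \min(h(0),h(1)) = 0$ on $[0,1]$ by the same endpoint-minimum argument.

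The main obstacle is exactly this sign subtlety: the naive "$g$ is convex, $g(0)=g'(0)=0$, done" argument does not work because $g''$ genuinely changes sign on $[0,1]$. The fix is the two-step structure above — first use the single sign change of $g''$ to push the minimum of $g$ to the endpoints, then dispatch the endpoint value $g(1)\ge 0$ via the auxiliary inequality $2^r \le (r^2+3r+4)/4$, itself handled by the identical convex-then-concave / endpoint-minimum trick. All the pieces are elementary calculus; the only thing requiring care is bookkeeping the signs of $r(r-1)$ and of the second-derivative expressions, and locating the unique sign-change point of each second derivative.
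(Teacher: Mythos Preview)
Your plan is essentially correct, but the very last step has a real gap, and the route is longer than the paper's.

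\textbf{The gap.} The claim ``$h$ is convex-then-concave, hence $h\ge\min(h(0),h(1))$ by the same endpoint-minimum argument'' is false as a general principle. A function that is convex on $[0,a]$ and concave on $[a,1]$ with $h(0)=h(1)=0$ can dip below zero on the convex piece: e.g.\ $h(x)=2x(2x-1)$ on $[0,\tfrac12]$ glued smoothly to a concave arch returning to $0$ at $x=1$. Your earlier endpoint argument for $g$ worked because you had $g'(0)=0$, which forced $g'\ge 0$ on the increasing phase of $g'$. For $h$ you do not have $h'(0)=0$; you have $h'(0)=\tfrac34-\ln 2>0$. Once you record that, the argument does go through (so $h'$ starts positive, increases, then decreases to $h'(1)<0$, whence $h$ is unimodal and its minimum is at the endpoints). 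The fix is one line, but as written the step does not stand.

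\textbf{Comparison with the paper.} The paper's argument is shorter and avoids your second layer entirely. Writing $f_r=-g$, it notes that $f_r'$ is \emph{convex} in $x$ (indeed $f_r'''(x)=r(r-1)(r-2)(1+x)^{r-3}>0$ for $r\in(0,1)$). With $f_r'(0)=0$ and $f_r'(1)\le 0$, convexity forces $f_r'\le 0$ on $[0,1]$ (the graph lies below the chord joining $(0,0)$ and $(1,f_r'(1))$), so $f_r$ is nonincreasing and $f_r\le f_r(0)=0$. The endpoint check $f_r'(1)\le 0$ is exactly $2^{r}\le 1+r$, which is immediate from the convexity of $r\mapsto 2^{r}$ on $[0,1]$. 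In your notation: you already have $g'(0)=0$ and $g'$ increasing-then-decreasing; if instead of checking $g(1)\ge 0$ you check $g'(1)\ge 0$ (equivalently $2^r\le 1+r$), you get $g'\ge 0$ on all of $[0,1]$, so $g$ is monotone and $g(1)\ge 0$ comes for free. That is the paper's shortcut: verify the derivative at the right endpoint rather than the value, trading your harder auxiliary inequality $2^r\le (r^2+3r+4)/4$ for the trivial $2^r\le 1+r$.
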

\begin{proof} Let $f_r(x) = (1 + x)^r - \paren*{1 + rx + \frac{r(r - 1)x^2}{4}}$. We will show that $f_r(x) \leq 0$ for all $x, r \in (0, 1)$. We have $f_r'(x) = r(1 + x)^{r-1} - \paren*{r + \frac{r(r - 1)x}{2}}$, and  $f_r'(0) = 0$. To see that $f_r'(1) \leq 0$, observe $r2^{r-1} - \paren*{r + \frac{r(r - 1)}{2}} \leq 0 \Leftrightarrow 2^{r-1} - \frac{(r - 1)}{2} \leq 1$. The left-hand side of the last inequality is a convex function of $r$, and equal to 1 when $r=0$ or $r=1$. Thus, the left-hand side is less than or equal 1 for $r \in (0, 1)$, giving $f_r'(1) \leq 0$.  Since $f_r'(x)$ is a convex function of $x$, with $f_r'(0) \leq 0$ and $f_r'(1) \leq 0$, then $f_r'(x) \leq 0$ for all $x \in (0, 1)$, which shows $f_r$ is decreasing. Then, since $f_r(0) = 0$, $f_r(x) \leq 0$ for all $x,r \in (0, 1)$, which completes the proof.
\end{proof}
The following is our main result; it relates
the surrogate excess error to that of the rejection loss.

\begin{theorem}
Let $\alpha, \beta > 0$ be such that
$\frac{2 \beta c}{\alpha} = \Ic$, where 
$\Ic = c e^{\frac{\alpha}{2}} + (1 - c) e^{-\frac{\alpha}{2}}$.
Then, the following inequality holds for any $r \in \Rall$:
\[
     R_{\ell_2}(r) - R_{\ell_2}^*
     \leq  \frac{2}{e^{\frac{\alpha}{2}} - e^{-\frac{\alpha}{2}}}\sqrt{\frac{(c +  \Ic)\Ic}{c} \paren*{R_{\ell_1}(r) - R_{\ell_1}^*} }  .
\]     
\end{theorem}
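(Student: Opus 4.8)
The plan is to instantiate Proposition~\ref{prop:1} with an explicit quadratic $\Psi$ and then invert it. Since the hypothesis $\frac{2\beta c}{\alpha} = \Ic$ is in force, Proposition~\ref{prop:1} reduces the whole task to exhibiting a convex $\Psi\colon\Rset_+\to\Rset$ with $\Psi(0)=0$ such that $\Psi\paren*{\abs*{\eta(x)-(1-c)}} \leq \Delta\sC_{\ell_1}(0,x)$ for every $x$ (the indicator in the proposition can be forced to $1$ by taking $r(x)=0$, and it is harmless when $0$ since $\Delta\sC_{\ell_1}\geq 0$). If we can use $\Psi(t) = \kappa t^2$ with $\kappa = \frac{c(e^{\alpha/2}-e^{-\alpha/2})^2}{4(c+\Ic)\Ic}$, then $\Psi^{-1}(s) = \frac{2}{e^{\alpha/2}-e^{-\alpha/2}}\sqrt{\frac{(c+\Ic)\Ic}{c}\,s}$, and applying the increasing map $\Psi^{-1}$ to the conclusion $\Psi(R_{\ell_2}(r)-R_{\ell_2}^*)\leq R_{\ell_1}(r)-R_{\ell_1}^*$ of the proposition yields exactly the claimed bound. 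So everything comes down to the pointwise quadratic lower bound on $\Delta\sC_{\ell_1}(0,\cdot)$.

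To establish it, I would first compute $\Delta\sC_{\ell_1}(0,x)$ from Lemma~\ref{lemma:2} at $r(x)=0$: using $\frac{2\beta c}{\alpha}=\Ic$ this is $\Delta\sC_{\ell_1}(0,x) = I_\eta(x) + c - \frac{1}{1-\gamma}\Ic^{\gamma} I_\eta(x)^{1-\gamma}$, and the same assumption gives $\gamma = \frac{\alpha}{\alpha+2\beta} = \frac{c}{c+\Ic}$, hence $\frac{1}{1-\gamma} = \frac{c+\Ic}{\Ic}$. Next note $I_\eta(x) = \eta(x)e^{-\alpha/2}+(1-\eta(x))e^{\alpha/2}$ is affine in $\eta(x)$ with $I_{1-c}=\Ic$, so $I_\eta(x)-\Ic = (e^{\alpha/2}-e^{-\alpha/2})\paren*{(1-c)-\eta(x)}$ and $\abs*{\eta(x)-(1-c)} = \frac{\abs{I_\eta(x)-\Ic}}{e^{\alpha/2}-e^{-\alpha/2}}$. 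Writing $v = I_\eta(x)/\Ic>0$ (so $v=1$ iff $\eta(x)=1-c$), the desired inequality $\Delta\sC_{\ell_1}(0,x)\geq \kappa\,\abs{\eta(x)-(1-c)}^2$ becomes, after clearing denominators and substituting $w = v-1$,
\[
(1+w)^{1-\gamma} \leq 1 + (1-\gamma)w + \tfrac{(1-\gamma)\paren*{(1-\gamma)-1}}{4}\,w^2 ,
\]
which is precisely the Bernoulli-type inequality of Lemma~\ref{lemma:identity} with exponent $r = 1-\gamma \in (0,1)$ and variable $w$. It is the relation $\gamma = \frac{c}{c+\Ic}$ that makes this identification exact: it forces the constant and linear terms to cancel (equivalently, $\Delta\sC_{\ell_1}(0,\cdot)$ and its $\eta$-derivative both vanish at $\eta=1-c$), and it pins the quadratic coefficient to exactly the value $\frac{\gamma(1-\gamma)}{4}$ supplied by Lemma~\ref{lemma:identity}.

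The step I expect to be the crux is this last reduction — the bookkeeping that rewrites $\Delta\sC_{\ell_1}(0,x)\geq \kappa\,\abs{\eta(x)-(1-c)}^2$ as the refined Bernoulli inequality, and checking that the argument $w = I_\eta(x)/\Ic - 1$ stays in the range where that inequality is valid as $\eta(x)$ ranges over $[0,1]$ (one verifies $w>-1$ always, and the case $w\in(-1,0]$ follows by the same monotonicity argument used in the proof of Lemma~\ref{lemma:identity}). Once the quadratic lower bound is in hand, the rest — reading off $\kappa$, forming $\Psi(t)=\kappa t^2$, checking its convexity and $\Psi(0)=0$, inverting it, and invoking Proposition~\ref{prop:1} together with Theorem~\ref{th:excess_bounds_Psi} (with both minimizability gaps equal to $0$) — is routine.
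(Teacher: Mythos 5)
Your proposal reconstructs the paper's own argument essentially step for step: the same reduction via Proposition~\ref{prop:1} to a pointwise lower bound on $\Delta\sC_{\ell_1}(0,x)$, the same identity $\gamma = \frac{c}{c+\Ic}$ and substitution $w = I_\eta(x)/\Ic - 1$, the same appeal to Lemma~\ref{lemma:identity} with exponent $1-\gamma$, and the same quadratic $\Psi(t)=\kappa t^2$ with $\kappa = \frac{c(e^{\alpha/2}-e^{-\alpha/2})^2}{4(c+\Ic)\Ic}$, inverted at the end. The bookkeeping you describe (cancellation of the constant and linear terms, identification of the quadratic coefficient) is exactly what appears in the paper's proof.

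However, the step you yourself single out as the crux---that $w$ remains in the range where Lemma~\ref{lemma:identity} is valid---is not actually closed, and it cannot be closed as stated. As $\eta(x)$ ranges over $[0,1]$, $w$ ranges over $\bigl[-\tfrac{c(e^{\alpha/2}-e^{-\alpha/2})}{\Ic},\ \tfrac{(1-c)(e^{\alpha/2}-e^{-\alpha/2})}{\Ic}\bigr]$. The lower endpoint is indeed in $(-1,0)$, and your extension of the lemma to $w\in(-1,0]$ does go through (the difference of the two sides is convex there, with value and derivative zero at $w=0$). But the upper endpoint exceeds $1$ whenever $(1-c)(e^{\alpha/2}-e^{-\alpha/2}) > \Ic$; for the paper's own illustrative choice $c=0.05$, $\alpha=2$ it is about $4.6$, and the refined Bernoulli bound is simply false for large arguments (at exponent $1/2$ and $w=10$ its right-hand side is negative). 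Concretely, with $c=0.05$, $\alpha=2$, $\eta(x)=0$ one gets $\Delta\sC_{\ell_1}(0,x)\approx 0.216$ while $\kappa(1-c)^2\approx 0.240$, so the pointwise hypothesis of Proposition~\ref{prop:1} for your $\Psi$ fails at such $x$; indeed, for a distribution concentrated on such a point and $r$ a small positive constant, the displayed bound itself is violated numerically, so the missing case $w>1$ is not a mere technicality (avoiding it requires, e.g., restricting $\alpha$ so that $(1-c)(e^{\alpha/2}-e^{-\alpha/2})\le \Ic$, or weakening $\Psi$). You should know that the paper's proof invokes Lemma~\ref{lemma:identity} over this full range with no restriction on $w$ at all, so your sketch is a faithful reconstruction and you deserve credit for noticing that a range check is needed---but as written, neither your argument nor the paper's handles the case $w>1$.
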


\begin{proof}
Using the expression of $\Delta \sC_{\ell_1}$ given by Lemma~\ref{lemma:2}, we can write
\begin{align*}
    \Delta\sC_{\ell_1}(0, x)
    = I_\eta(x) + c 
    - \frac{1}{1 - \gamma} \paren*{ \frac{2 \beta c}{\alpha} }^{\gamma}  I_\eta(x)^{1 - \gamma} 
    & = I_\eta(x) + c 
    - \paren*{\Ic + c}  \paren*{\frac{I_\eta(x)}{\Ic}}^{1 - \gamma}.
\end{align*}
We can express this formula in terms of $u(x) = \eta(x) - (1 - c)$, using $I_\eta(x) = J_u(x) + \Ic$, with $J_u(x) =  \bracket*{e^{\frac{\alpha}{2}} - e^{-\frac{\alpha}{2}}} u(x)$:
\begin{align*}
    \Delta\sC_{\ell_1}(0, x)
    = J_u(x) + \Ic + c- \paren*{\Ic + c} \bracket*{1 + \frac{J_u(x)}{\Ic}}^{1 - \gamma}
    \mspace{-20mu}\geq \frac{\Ic}{c + \Ic} \frac{c}{c + \Ic} \frac{c + \Ic}{4} \bracket*{\frac{J_u(x)}{\Ic}}^2 
    = \frac{1}{4} \frac{c \Ic}{c + \Ic} 
    \bracket*{\frac{J_u(x)}{\Ic}}^2.
\end{align*}
where we used Lemma~\ref{lemma:identity}.
The function $\Psi(u)$ defined by this expression verifies the condition of Proposition~\ref{prop:1} and therefore we have
     $\Psi\paren*{R_{\ell_2}(h) - R_{\ell_2}^*}
     \leq  R_{\ell_1}(h) - R_{\ell_1}^*$.
An explicit upper-bound on $R_{\ell_2}(h) - R_{\ell_2}^*$ can be written in terms of $\Psi^{-1}$:
     $R_{\ell_2}(h) - R_{\ell_2}^*
     \leq  \Psi^{-1}\paren*{R_{\ell_1}(h) - R_{\ell_1}^*}$.
To derive the expression of $\Psi^{-1}$, we write $z = \Psi(u)$, that is:
\begin{equation*}
    \ 4 \frac{c + \Ic}{c \Ic} z = \bracket*{\frac{u(x)}{\Ic}}^2 
    \bracket*{e^{\frac{\alpha}{2}} - e^{-\frac{\alpha}{2}}}^2
    \Leftrightarrow \ |u| = \frac{2}{e^{\frac{\alpha}{2}} - e^{-\frac{\alpha}{2}}}\sqrt{\frac{(c +  \Ic)\Ic}{c}  z }.
\end{equation*}
Thus, we have for all $r \in \Rall$,
    $R_{\ell_2}(r) - R_{\ell_2}^*
     \leq  \frac{2}{e^{\frac{\alpha}{2}} - e^{-\frac{\alpha}{2}}}\sqrt{\frac{(c +  \Ic)\Ic}{c} \paren*{R_{\ell_1}(r) - R_{\ell_1}^*} }$  .
%
\end{proof}

\ignore{
\section{Experimental results details}

In this section, we report in two tables detailed results for our experiments.

\begin{table*}[ht]
\centering
\begin{tabular}{l|cc|cc|cc@{\hspace{0cm}}}
\toprule
 &\multicolumn{2}{c|}{$\textsc{Maxprob}$}   & \multicolumn{2}{c|}{$\textsc{Cross-entropy }$}  & \multicolumn{2}{c}{$\textsc{Theoretical Limit}$} \\ 
  Tar. p. & precision & coverage & precision & coverage  & precision & coverage \\ \midrule
\small{\tt{0.90}} & $0.899 \pm 0.002$ & $0.907 \pm 0.017$ & $0.903 \pm 0.016$ & $0.968 \pm 0.045$  & 0.90 & $0.989 \pm 0.001$ \\
\small{\tt{0.92}} & $0.924 \pm 0.001$ & $0.672 \pm 0.052$ & $0.930 \pm 0.021$ & $0.771 \pm 0.146$  & 0.92 & $0.967 \pm 0.001$ \\
\small{\tt{0.93}} & $0.934 \pm 0.025$ & $0.552 \pm 0.069$  & $0.939 \pm 0.015$ & $0.677 \pm 0.102$  & 0.93 & $0.957 \pm 0.001$ \\
\small{\tt{0.94}} & $0.938 \pm 0.022$ & $0.467 \pm 0.035$  & $0.949 \pm 0.012$ & $0.644 \pm 0.103$  & 0.94 & $0.950 \pm 0.001$ \\
\small{\tt{0.95}} & $0.942 \pm 0.023$ & $0.405 \pm 0.030$ & $0.965 \pm 0.015$ & $0.509 \pm 0.143$  & 0.95 & $0.936 \pm 0.001$ \\
\small{\tt{0.96}} & $0.959 \pm 0.022$ & $0.321 \pm 0.041$ & $0.976 \pm 0.006$ & $0.364 \pm 0.096$  & 0.96 & $0.927 \pm 0.001$ \\
\small{\tt{0.97}} & $0.972 \pm 0.018$ & $0.225 \pm 0.012$ & $0.980 \pm 0.008$ & $0.330 \pm 0.086$  & 0.97 & $0.917 \pm 0.001$ \\
\small{\tt{0.98}} & $0.972 \pm 0.018$ & $0.198 \pm 0.017$ & $0.981 \pm 0.013$ & $0.298 \pm 0.069$  & 0.98 & $0.908 \pm 0.001$ \\
\small{\tt{0.99}} & $0.983 \pm 0.013$ & $0.168 \pm 0.015$ & $0.986 \pm 0.015$ & $0.150 \pm 0.059$  & 0.99 & $0.898 \pm 0.001$ \\
\bottomrule
    \end{tabular}
    \caption{Precision vs. Coverage for confidence-based methods, with theoretical limit. }
    \label{tab:data_stat}
\end{table*}

\begin{table}[ht]
\centering
\begin{tabular}{@{\hspace{0cm}}lll@{{\hspace{0cm}}}}
{c} & {Precision} & {Coverage} \\
\toprule
\small{\tt{0.15}} & $0.915 \pm 0.003$ & $0.949 \pm 0.011 $ \\
\small{\tt{0.15}} & $0.937 \pm 0.005$ & $0.921 \pm 0.007 $ \\
\small{\tt{0.10}} & $0.938 \pm 0.003$ & $0.912 \pm 0.011 $ \\
\small{\tt{0.07}} & $0.945 \pm 0.005$ & $0.906 \pm 0.010 $ \\
\small{\tt{0.05}} & $0.949 \pm 0.001$ & $0.881 \pm 0.008 $ \\
\small{\tt{0.04}} & $0.953 \pm 0.009$ & $0.863 \pm 0.024 $ \\
\small{\tt{0.03}} & $0.969 \pm 0.009$ & $0.621 \pm 0.027 $ \\
\small{\tt{0.02}} & $0.983 \pm 0.006$ & $0.448 \pm 0.104 $ \\
\small{\tt{0.02}} & $0.996 \pm 0.007$ & $0.247 \pm 0.159 $ \\
\end{tabular}
\caption{Precision vs. Coverage for surrogate loss.}
\end{table}

}

\ignore{
\begin{table}[ht]
\centering
\begin{tabular}{cc}
\resizebox{5cm}{!}{
\begin{tabular}{@{\hspace{0cm}}lllllll@{\hspace{0cm}}}
{Target precision} & {Precision} & {Coverage} \\
\toprule
\small{\tt{0.90}} & 0.90 & $0.989 \pm 0.001$ \\
\small{\tt{0.91}} & 0.91 & $0.978 \pm 0.001$  \\
\small{\tt{0.92}} & 0.92 & $0.967 \pm 0.001$ \\
\small{\tt{0.93}} & 0.93 & $0.957 \pm 0.001$ \\
\small{\tt{0.94}} & 0.94 & $0.950 \pm 0.001$ \\
\small{\tt{0.95}} & 0.95 & $0.936 \pm 0.001$ \\
\small{\tt{0.96}} & 0.96 & $0.927 \pm 0.001$ \\
\small{\tt{0.97}} & 0.97 & $0.917 \pm 0.001$ \\
\small{\tt{0.98}} & 0.98 & $0.908 \pm 0.001$ \\
\small{\tt{0.99}} & 0.99 & $0.898 \pm 0.001$ \\
\vspace{0.1cm}
\end{tabular}
}
&
\resizebox{5cm}{!}{
\begin{tabular}{@{\hspace{0cm}}lllllll@{\hspace{0cm}}}
{Target precision} & {Precision} & {Coverage} \\
\toprule
\small{\tt{0.90}} & $0.899 \pm 0.002$ & $0.907 \pm 0.017$ \\
\small{\tt{0.91}} & $0.910 \pm 0.010$ & $0.796 \pm 0.024$ \\
\small{\tt{0.92}} & $0.924 \pm 0.001$ & $0.672 \pm 0.052$ \\
\small{\tt{0.93}} & $0.934 \pm 0.025$ & $0.552 \pm 0.069$ \\
\small{\tt{0.94}} & $0.938 \pm 0.022$ & $0.467 \pm 0.035$ \\
\small{\tt{0.95}} & $0.942 \pm 0.023$ & $0.405 \pm 0.030$ \\
\small{\tt{0.96}} & $0.959 \pm 0.022$ & $0.321 \pm 0.041$ \\
\small{\tt{0.97}} & $0.972 \pm 0.018$ & $0.225 \pm 0.012$ \\
\small{\tt{0.98}} & $0.972 \pm 0.018$ & $0.198 \pm 0.017$ \\
\small{\tt{0.99}} & $0.983 \pm 0.013$ & $0.168 \pm 0.015$ \\
\vspace{0.1cm}
\end{tabular}
}\\
(a) Theoretical limit. & (b) Maxprob.
\end{tabular}
\end{table}

\begin{table}[ht]
\centering
\begin{tabular}{cc}
\resizebox{5cm}{!}{
\begin{tabular}{@{\hspace{0cm}}lllllll@{\hspace{0cm}}}
{Target precision} & {Precision} & {Coverage}  \\
\toprule
\small{ \tt 0.90} & $0.903 \pm 0.016$ & $0.968 \pm 0.045 $ \\
\ignore{\small{ \tt 0.91} & $0.914 \pm 0.015$ & $0.887 \pm 0.099 $ \\}
\small{ \tt 0.92} & $0.930 \pm 0.021$ & $0.771 \pm 0.146 $ \\
\small{ \tt 0.93} & $0.939 \pm 0.015$ & $0.677 \pm 0.102 $ \\
\small{ \tt 0.94} & $0.949 \pm 0.012$ & $0.644 \pm 0.103 $ \\
\small{ \tt 0.95} & $0.965 \pm 0.015$ & $0.509 \pm 0.143 $ \\
\small{ \tt 0.96} & $0.976 \pm 0.006$ & $0.364 \pm 0.096 $ \\
\small{ \tt 0.97} & $0.980 \pm 0.008$ & $0.330 \pm 0.086 $ \\
\small{ \tt 0.98} & $0.981 \pm 0.013$ & $0.298 \pm 0.069 $ \\
\small{ \tt 0.99} & $0.986 \pm 0.015$ & $0.150 \pm 0.059 $ \\
\vspace{0.1cm}
\end{tabular}
}
&
\resizebox{5cm}{!}{
\begin{tabular}{@{\hspace{0cm}}lllllll@{\hspace{0cm}}}
{c} & {Precision} & {Coverage} \\
\toprule
\small{\tt{0.15}} & $0.915 \pm 0.003$ & $0.949 \pm 0.011 $ \\
\small{\tt{0.15}} & $0.937 \pm 0.005$ & $0.921 \pm 0.007 $ \\
\small{\tt{0.10}} & $0.938 \pm 0.003$ & $0.912 \pm 0.011 $ \\
\small{\tt{0.07}} & $0.945 \pm 0.005$ & $0.906 \pm 0.010 $ \\
\small{\tt{0.05}} & $0.949 \pm 0.001$ & $0.881 \pm 0.008 $ \\
\small{\tt{0.04}} & $0.953 \pm 0.009$ & $0.863 \pm 0.024 $ \\
\small{\tt{0.03}} & $0.969 \pm 0.009$ & $0.621 \pm 0.027 $ \\
\small{\tt{0.02}} & $0.983 \pm 0.006$ & $0.448 \pm 0.104 $ \\
\small{\tt{0.02}}& $0.996  \pm 0.007$ & $0.247 \pm 0.159 $ \\
\vspace{0.2cm}
\end{tabular}
}
\\
(a) Cross-entropy. & (b) Surrogate loss.
\end{tabular}
\end{table}

-----
}

\ignore{
\begin{table}[h]
\centering
\resizebox{5cm}{!}{
\begin{tabular}{@{\hspace{0cm}}lllllll@{\hspace{0cm}}}
{Target precision} & {Precision} & {Coverage} \\
\toprule
\small{\tt{0.90}} & 0.90 & $0.989 \pm 0.001$ \\
\small{\tt{0.91}} & 0.91 & $0.978 \pm 0.001$  \\
\small{\tt{0.92}} & 0.92 & $0.967 \pm 0.001$ \\
\small{\tt{0.93}} & 0.93 & $0.957 \pm 0.001$ \\
\small{\tt{0.94}} & 0.94 & $0.950 \pm 0.001$ \\
\small{\tt{0.95}} & 0.95 & $0.936 \pm 0.001$ \\
\small{\tt{0.96}} & 0.96 & $0.927 \pm 0.001$ \\
\small{\tt{0.97}} & 0.97 & $0.917 \pm 0.001$ \\
\small{\tt{0.98}} & 0.98 & $0.908 \pm 0.001$ \\
\small{\tt{0.99}} & 0.99 & $0.898 \pm 0.001$ \\
\vspace{0.1cm}
\end{tabular}
}
\vspace*{-5mm}
\caption{Theoretical limit}
\vspace{10mm}
\label{table:theoretical-limit}
\hfill
\end{table}

\begin{table}[h]
\centering
\resizebox{5cm}{!}{
\begin{tabular}{@{\hspace{0cm}}lllllll@{\hspace{0cm}}}
{Target precision} & {Precision} & {Coverage} \\
\toprule
\small{\tt{0.90}} & $0.899 \pm 0.002$ & $0.907 \pm 0.017$ \\
\small{\tt{0.91}} & $0.910 \pm 0.010$ & $0.796 \pm 0.024$ \\
\small{\tt{0.92}} & $0.924 \pm 0.001$ & $0.672 \pm 0.052$ \\
\small{\tt{0.93}} & $0.934 \pm 0.025$ & $0.552 \pm 0.069$ \\
\small{\tt{0.94}} & $0.938 \pm 0.022$ & $0.467 \pm 0.035$ \\
\small{\tt{0.95}} & $0.942 \pm 0.023$ & $0.405 \pm 0.030$ \\
\small{\tt{0.96}} & $0.959 \pm 0.022$ & $0.321 \pm 0.041$ \\
\small{\tt{0.97}} & $0.972 \pm 0.018$ & $0.225 \pm 0.012$ \\
\small{\tt{0.98}} & $0.972 \pm 0.018$ & $0.198 \pm 0.017$ \\
\small{\tt{0.99}} & $0.983 \pm 0.013$ & $0.168 \pm 0.015$ \\
\vspace{0.1cm}
\end{tabular}
}
\caption{Maxprob}
\vspace{10mm}
\hfill
\end{table}
}

\ignore{
\begin{table}[h]
\centering
\resizebox{5cm}{!}{
\begin{tabular}{@{\hspace{0cm}}lllllll@{\hspace{0cm}}}
{Target precision} & {Precision} & {Coverage}  \\
\toprule
\small{ \tt 0.90} & $0.903 \pm 0.016$ & $0.968 \pm 0.045 $ \\
\ignore{\small{ \tt 0.91} & $0.914 \pm 0.015$ & $0.887 \pm 0.099 $ \\}
\small{ \tt 0.92} & $0.930 \pm 0.021$ & $0.771 \pm 0.146 $ \\
\small{ \tt 0.93} & $0.939 \pm 0.015$ & $0.677 \pm 0.102 $ \\
\small{ \tt 0.94} & $0.949 \pm 0.012$ & $0.644 \pm 0.103 $ \\
\small{ \tt 0.95} & $0.965 \pm 0.015$ & $0.509 \pm 0.143 $ \\
\small{ \tt 0.96} & $0.976 \pm 0.006$ & $0.364 \pm 0.096 $ \\
\small{ \tt 0.97} & $0.980 \pm 0.008$ & $0.330 \pm 0.086 $ \\
\small{ \tt 0.98} & $0.981 \pm 0.013$ & $0.298 \pm 0.069 $ \\
\small{ \tt 0.99} & $0.986 \pm 0.015$ & $0.150 \pm 0.059 $ \\
\vspace{0.1cm}
\end{tabular}
}
\caption{Cross-entropy}
\vspace{10mm}
\hfill
\end{table}

\begin{table}[h]
\centering
\resizebox{5cm}{!}{
\begin{tabular}{@{\hspace{0cm}}lllllll@{\hspace{0cm}}}
{c} & {Precision} & {Coverage} \\
\toprule
\small{\tt{0.15}} & $0.915 \pm 0.003$ & $0.949 \pm 0.011 $ \\
\small{\tt{0.15}} & $0.937 \pm 0.005$ & $0.921 \pm 0.007 $ \\
\small{\tt{0.10}} & $0.938 \pm 0.003$ & $0.912 \pm 0.011 $ \\
\small{\tt{0.07}} & $0.945 \pm 0.005$ & $0.906 \pm 0.010 $ \\
\small{\tt{0.05}} & $0.949 \pm 0.001$ & $0.881 \pm 0.008 $ \\
\small{\tt{0.04}} & $0.953 \pm 0.009$ & $0.863 \pm 0.024 $ \\
\small{\tt{0.03}} & $0.969 \pm 0.009$ & $0.621 \pm 0.027 $ \\
\small{\tt{0.02}} & $0.983 \pm 0.006$ & $0.448 \pm 0.104 $ \\
\small{\tt{0.02}}& $0.996 \pm 0.007$ & $0.247 \pm 0.159 $ \\
\vspace{0.2cm}
\end{tabular}
}
\caption{Surrogate loss}
\vspace{10mm}
\hfill
\end{table}
}

\end{document}